\acrodef{SSL}[SSL]{self-supervised learning}
\acrodef{EMA}[EMA]{exponential moving average}
\acrodef{MLP}[MLP]{multi-layer perceptron}
\acrodef{NTK}[NTK]{neural tangent kernel}
\acrodef{BYOL}[BYOL]{Bootstrap Your Own Latent}
\newcommand{\loss}{\ensuremath{\mathcal{L}}^\mathrm{cos}}
\newcommand{\Leuc}{\mathcal{L}^\mathrm{euc}}
\newcommand{\vtheta}{\ensuremath{\boldsymbol{\theta}}}
\newcommand{\vzone}{\ensuremath{\boldsymbol{z}^{(1)}}}
\newcommand{\vztwo}{\ensuremath{\boldsymbol{z}^{(2)}}}
\newcommand{\vzhatone}{\ensuremath{\hat{\boldsymbol{z}}^{(1)}}}
\newcommand{\vzhattwo}{\ensuremath{\hat{\boldsymbol{z}}^{(2)}}}
\newcommand{\sg}{\mathrm{SG}}
\newcommand{\Wpred}{W_{\mathrm{P}}}
\newcommand{\Rn}{\mathbb{R}^{N}}
\newcommand{\Rm}{\mathbb{R}^{M}}
\newcommand{\Rmxm}{\mathbb{R}^{M \times M}}
\newcommand{\vx}{\ensuremath{\boldsymbol{x}}}
\newcommand{\vz}{\ensuremath{\boldsymbol{z}}}
\newcommand{\vzhat}{\ensuremath{\hat{\boldsymbol{z}}}}
\newcommand{\va}{\ensuremath{\boldsymbol{a}}}
\newcommand{\vb}{\ensuremath{\boldsymbol{b}}}
\newcommand{\der}{\ensuremath{\mathrm{d}}}
\newcommand{\avg}{\mathbb{E}}
\title{Implicit variance regularization in non-contrastive SSL}
\author{%
  Manu Srinath Halvagal$^{1,2, \ast}$ \And Axel Laborieux$^{1, \ast}$ \And Friedemann Zenke$^{1,2}$ \AND
  \texttt{\{firstname.lastname\}@fmi.ch} \\
  $^{1}$ Friedrich Miescher Institute for Biomedical Research, Basel, Switzerland \\
  $^{2}$ Faculty of Science, University of Basel, Basel, Switzerland \\
  $^*$ These authors contributed equally.
}
\begin{document}
\begin{bibunit}

\maketitle

\begin{abstract}
   Non-contrastive \ac{SSL} methods like \acs{BYOL} and SimSiam rely on asymmetric predictor networks to avoid representational collapse without negative samples. 
   Yet, how predictor networks facilitate stable learning is not fully understood.
   While previous theoretical analyses assumed Euclidean losses, most practical implementations rely on cosine similarity.
   To gain further theoretical insight into non-contrastive \ac{SSL}, we analytically study learning dynamics in conjunction with Euclidean and cosine similarity in the eigenspace of closed-form linear predictor networks.
   We show that both avoid collapse through implicit variance regularization albeit through different dynamical mechanisms.
   Moreover, we find that the eigenvalues act as effective learning rate multipliers and propose a family of isotropic loss functions (IsoLoss) that equalize convergence rates across eigenmodes.
   Empirically, IsoLoss speeds up the initial learning dynamics and increases robustness, thereby allowing us to dispense with the \ac{EMA} target network typically used with non-contrastive methods.
   Our analysis sheds light on the variance regularization mechanisms of non-contrastive \ac{SSL} and lays the theoretical grounds for crafting novel loss functions that shape the learning dynamics of the predictor's spectrum.
\end{abstract}

\section{Introduction}
\label{sec:intro} 

\Ac{SSL} has emerged as a powerful method to learn useful representations from vast quantities of unlabeled data \citep{grill2020bootstrap, chen2020simple, caron2020unsupervised, caron2021emerging, chen2021exploring, zbontar2021barlow, bardes2021vicreg, assran2022masked}.
In \ac{SSL}, the network's objective is to ``pull'' together its outputs for two differently augmented versions of the same input, so that they learn representations that are predictive across randomized transformations \citep{tian2020makes_good_views}. 
To avoid the trivial solution whereby the network output becomes constant, also called representational collapse, \ac{SSL} methods use either a contrastive objective to ``push'' apart representations of unrelated images \citep{bromley1993signature, oord2018representation, chen2020simple, caron2020unsupervised, wang2020understanding, chen2021intriguing} or other non-contrastive strategies.
Non-contrastive methods comprise explicit variance regularization techniques \citep{zbontar2021barlow, bardes2021vicreg, halvagal2022combination}, whitening approaches \citep{ermolov2021whitening, weng2022investigation}, and asymmetric losses     
as in \ac{BYOL} \citep{grill2020bootstrap} and SimSiam \citep{chen2021exploring}. 
Asymmetric losses break symmetry between the two branches by passing one of the representations through a predictor network and stopping gradients from flowing through the other ``target'' branch.
How this architectural modification prevents representational collapse is not obvious and has been the focus of several theoretical \citep{tian2021understanding, wang2021towards, liu2022bridging, tao2022exploring, richemond2023edge} and empirical studies \citep{richemond2020byol, wang2022importance, zhang2022does}.
A significant advance was provided by \citet{tian2021understanding} who showed that linear predictors align with the correlation matrix of the embeddings, and proposed the closed-form predictor DirectPred based on this insight.
However, previous analyses assumed a Euclidean loss at the output \citep{tian2021understanding, wang2021towards, liu2022bridging, richemond2023edge} except \citep{tao2022exploring},  whereas practical implementations typically use the cosine loss \cite{grill2020bootstrap, chen2021exploring} which yields superior performance on downstream tasks. 
This difference raises the question whether analysis based on the Euclidean loss provides an accurate account of the learning dynamics under the cosine loss.

In this work, we provide a comparative analysis of the learning dynamics for the Euclidean and cosine-based asymmetric losses in the eigenspace of the closed-form predictor DirectPred.
Our analysis shows how both losses implicitly regularize the variance of the representations, revealing a connection between asymmetric losses and explicit variance regularization in VICReg \citep{bardes2021vicreg}.
Yet, the learning dynamics induced by the two losses are markedly different.
While the learning dynamics of different eigenmodes decouple in the Euclidean case, dynamics remain coupled for the cosine loss.

Moreover, our analysis shows that for both losses, the predictor's eigenvalues act as learning rate multipliers, thereby slowing down learning for modes with small eigenvalues.
Based on our analysis, we craft an isotropic loss function (IsoLoss) for each case that resolves this problem and speeds up the initial learning dynamics.
Furthermore, IsoLoss works without an \ac{EMA} target network possibly because it boosts small eigenvalues, the purported role of the \ac{EMA} in DirectPred \citep{tian2021understanding}.
In summary, our main contributions are the following:
\begin{itemize}
    \item We analyze the \ac{SSL} dynamics in the eigenspace of closed-form linear predictors for asymmetric Euclidean and cosine losses and show that both perform implicit variance regularization, but with markedly different learning dynamics.
    \item Our analysis shows that predictor eigenvalues act as learning rate multipliers which slows down learning for small eigenvalues.
    \item We propose isotropic loss functions for both cases that equalize the dynamics across eigenmodes and improve robustness, thereby allowing to learn without an \ac{EMA} target network.
\end{itemize}

\section{Eigenspace analysis of the learning dynamics}
\label{sec:theory}

To gain a better analytic understanding of the SSL dynamics underlying non-contrastive methods such as \ac{BYOL} and SimSiam \citep{grill2020bootstrap, chen2021exploring}, we analyze them in the predictor's eigenspace. Specifically we proceed in three steps. 
First, building on DirectPred, we invoke the \ac{NTK} to derive simple dynamic expressions of the predictor's eigenmodes for Euclidean and cosine loss.
This formulation uncovers the implicit variance regularization mechanisms that prevent representational collapse. 
Using the eigenspace framework, we illustrate how removing the predictor or the stop-gradient results in collapse or run-away dynamics.
Finally, we find that predictor eigenvalues act as learning rate multipliers for their associated mode, thereby slowing down learning for small eigenvalues. 
We derive a modified isotropic loss function (IsoLoss) that provides more equalized learning dynamics across modes, which showcases how our analytic insights help to design novel loss functions that actively shape the predictor spectrum.
However, before we start our analysis, we will briefly review DirectPred \cite{tian2021understanding} and the \ac{NTK} \cite{jacot2018neural}, a powerful theoretical tool linking representational changes and parameter updates. We will rely on both concepts for our analysis.

\begin{figure}[tb]
  \centering
  \includegraphics[width=\textwidth]{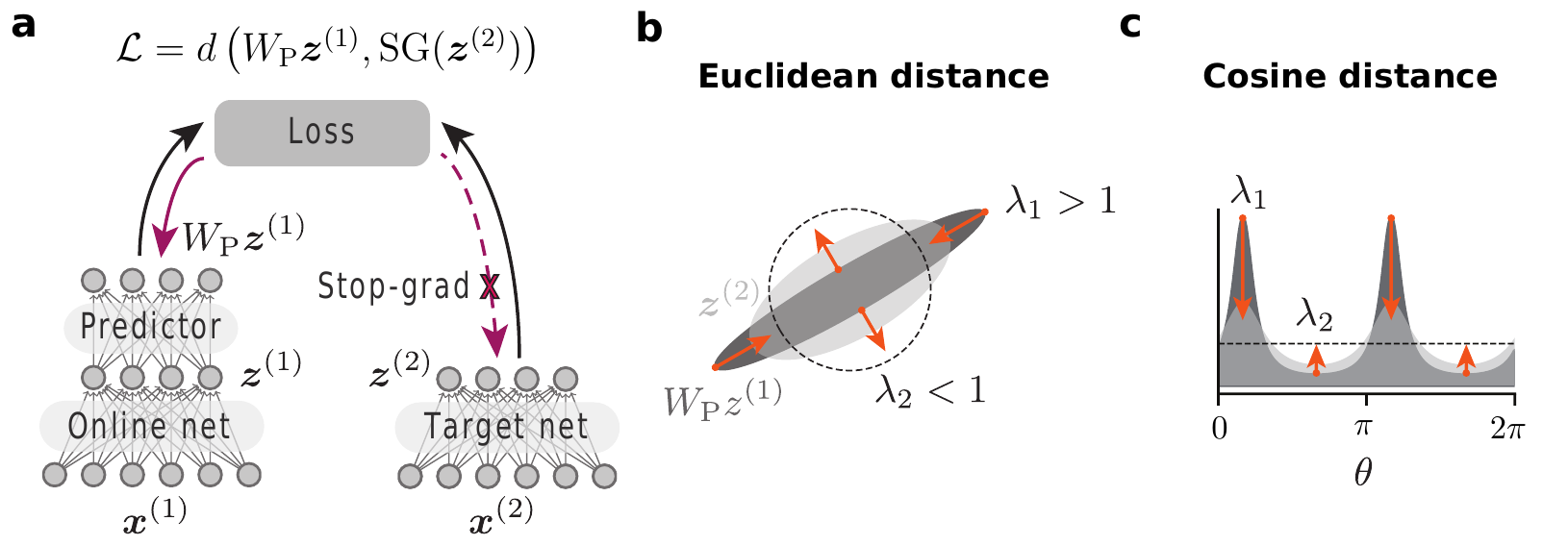}
  \caption{
  \textbf{(a)}~Schematic of a Siamese network with a predictor network and a stop-gradient on the target network branch. 
  The target network can be a copy (SimSiam~\cite{chen2021exploring}) or a moving average (\ac{BYOL}~\cite{grill2020bootstrap}) of the online network. In either case, the target network is not optimized with gradient descent.
  \textbf{(b)}~Visualization of learning dynamics under the Euclidean distance metric showing learning update directions along two eigenmodes, with the light cloud representing the distribution of the representations $\vz$,
  the darker cloud representing the predictor outputs $\Wpred\vz$, and
  the dotted circle indicates the steady state $\lambda_{1,2}=1$, reached during learning.
  All eigenvalues converge to one.
  \textbf{(c)}~Same as \textbf{(b)}, but for the cosine distance.
  The dotted line indicates the steady state $\lambda_1=\lambda_2$.
  }
  \label{fig:overview}
\end{figure}

\subsection{Background and problem setup}

We begin by reviewing DirectPred \citep{tian2021understanding} and defining our notation. 
In the following, we consider a Siamese neural network
$\vz = f\left(\vx; \vtheta \right)$
with output $\vz \in \Rm$, input $\vx \in \Rn$, and parameters $\vtheta$.
We further assume a linear predictor network $\Wpred \in \Rmxm$ 
and use the same  parameters for the online and target branches as in SimSiam \cite{chen2021exploring}.
We denote pairs of representations as $\vz^{(1)}, \vz^{(2)}$ corresponding to pairs of inputs $\vx^{(1)}, \vx^{(2)}$ related through augmentation and implicitly assume that all losses are averaged over many augmented pairs. 
The asymmetric loss function (Fig.~\ref{fig:overview}a), introduced in \ac{BYOL} \citep{grill2020bootstrap}, is then given by:
\begin{equation}
    \mathcal{L} = d\left( \Wpred \vzone, \sg(\vztwo) \right), \nonumber
\end{equation}
where $\sg$ denotes the stop-gradient operation, and $d$ is either the Euclidean distance metric $d(\va,\vb) = \tfrac{1}{2}\| \va - \vb \|^2$ or the cosine distance metric $d(\va,\vb) = -\frac{\va^\top \vb}{\lVert \va \rVert\lVert \vb \rVert}$. 
We refer to the corresponding loss functions as $\Leuc$ and $\loss$ respectively.

\paragraph{DirectPred.}
\citet{tian2021understanding} showed that a linear predictor in the \ac{BYOL} setting aligns during learning with the correlation matrix of representations $C_{\vz}:=\avg_{\vx}\left[ \vz \vz^\top \right]$, where the expectation is taken over the data distribution.
Since the correlation matrix is a real symmetric matrix, one can diagonalize it over $\mathbb{R}$: $C_{\vz} = UD_CU^\top$,
where $U$ 
is an orthogonal matrix whose columns are the eigenvectors of $C_{\vz}$ and $D_C$ is the real-valued diagonal matrix of the eigenvalues $s_{m}$ with $m\in [1,M]$.
Given this eigendecomposition, the authors proposed DirectPred, in which the predictor is not learned via gradient descent but directly set to:
\begin{equation}
\label{eq:directpred}
    \Wpred = f_\alpha \left( C_{\vz} \right) = UD_C^\alpha U^\top \quad,
\end{equation}
where $\alpha$ is a positive constant exponent applied element-wise to $D_C$.
The eigenvalues $\lambda_m$ of the predictor matrix $\Wpred$ are then $\lambda_m = s_m^\alpha$.
We use $D$ to denote the diagonal matrix containing the eigenvalues $\lambda_m$.
While DirectPred used $\alpha=0.5$, the follow-up study DirectCopy \citep{wang2021towards} showed that $\alpha=1$ is also effective while avoiding the expensive diagonalization step. 
While \citet{tian2021understanding} based their analysis on the Euclidean loss $\Leuc$,
most practical models, including Tian et al.'s large-scale experiments, relied on the cosine similarity loss $\loss$.
This discrepancy raises the question to what extent setting the predictor to the above expression is justified for the cosine loss.
Empirically, we find that a trainable linear predictor 
\emph{does} align its eigenspace with that of the representation correlation matrix also for the cosine loss (see Fig.~\ref{fig:eigvec_align} in Appendix~\ref{appendix:supp_fig}).

\paragraph{\Acf{NTK}.} The \ac{NTK} is a powerful analytical tool characterizing the learning dynamics of neural networks \cite{jacot2018neural, lee_wide_2019}. 
Here, we recall the definition of the \textit{empirical} \ac{NTK}~\citep{lee_wide_2019} corresponding to a single instantiation of the network's parameters $\boldsymbol{\theta}$. 
If $|\mathtt{D}|$ denotes the size of the training dataset, $\mathcal{L}: \mathbb{R}^M \rightarrow \mathbb{R}$ an arbitrary loss function, $\mathcal{X}$, the training data concatenated into one vector of size $N|\mathtt{D}|$, and $\mathcal{Z} = z(\mathcal{X})$, the concatenated output of size $M|\mathtt{D}|$,
then the empirical \ac{NTK} is the $(M|\mathtt{D}|\times M|\mathtt{D}|)$-sized matrix:
\begin{equation}
    \Theta_t(\mathcal{X}, \mathcal{X}) = \nabla_{\vtheta} \mathcal{\mathcal{Z}}\nabla_{\vtheta} \mathcal{\mathcal{Z}}^{\top}, \nonumber
\end{equation}
and the continuous-time gradient-descent dynamics \citep{lee_wide_2019} of the representations $\vz$ are given by:
\begin{equation}
    \label{eq:gradient_flow}
    \frac{\der {\vz}}{\der t} = -\eta {\Theta}_t(\vx, \mathcal{X}) \nabla_{{\mathcal{Z}}}\mathcal{L} \quad.
\end{equation}
In other words, the empirical \ac{NTK} links the representational dynamics $\frac{\der {\vz}}{\der t}$ under gradient descent on the parameters $\vtheta$, and the ``representational gradient'' $\nabla_{{\mathcal{Z}}}\mathcal{L}$.

\subsection{Implicit variance regularization in non-contrastive {SSL}}
As a starting point for our analysis, we first express the relevant loss functions in the eigenbasis of the predictor network.
We do this using a closed-form linear predictor as prescribed by DirectPred. 
In the following, we use $\hat{\vz} = U^{\top}\vz$ to denote the representation expressed in the eigenbasis.

\begin{restatable}[]{lemma}{lemdirectloss}
\label{lem:directloss}
\textnormal{(Euclidean and cosine loss in the predictor eigenspace)} 
Let $\Wpred$ be a linear predictor set according to DirectPred with eigenvalues $\lambda_m$,
and $\hat{\vz}$ the representations expressed in the predictor's eigenbasis.
Then the asymmetric Euclidean loss $\Leuc$ and cosine loss $\loss$ can be expressed as:
\begin{align}
    \Leuc &= \tfrac{1}{2} \sum_m^{M} | \lambda_m \hat{z}^{(1)}_m - \mathrm{SG}(\hat{z}^{(2)}_m) |^2 \quad, \label{eq:leuc} \\
    \loss &= -\sum_m^{M} \frac{\lambda_m\hat{z}^{(1)}_m \mathrm{SG}(\hat{z}^{(2)}_m)}{\Vert D \hat{\vz}^{(1)}\rVert\Vert \mathrm{SG}(\hat{\vz}^{(2)})\rVert} \quad.
\end{align}
\end{restatable}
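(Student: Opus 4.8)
The plan is to substitute the DirectPred closed form $\Wpred = UDU^\top$ (with $D=\mathrm{diag}(\lambda_1,\dots,\lambda_M)$) together with the change of basis $\vzhat = U^\top\vz$ into each loss, and then exploit the orthogonality $U^\top U = UU^\top = I$ to collapse everything onto the eigenbasis. Both identities are change-of-variables computations, so the proof is short; I would just be careful about the bookkeeping for the stop-gradient.

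First, for $\Leuc$, I would start from $\Leuc = \tfrac12\lVert \Wpred\vzone - \sg(\vztwo)\rVert^2$ and insert $\Wpred = UDU^\top$. Because $U$ is orthogonal, applying $U^\top$ inside the norm changes nothing, giving $\tfrac12\lVert DU^\top\vzone - U^\top\sg(\vztwo)\rVert^2$. Using that $U$ is a fixed matrix (determined by the current $C_{\vz}$ and not by the optimized parameters), the stop-gradient commutes with multiplication by $U^\top$, i.e.\ $U^\top\sg(\vztwo) = \sg(U^\top\vztwo) = \sg(\vzhattwo)$, since detaching before or after a constant linear map yields the same forward value and the same (vanishing) gradient. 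This produces $\tfrac12\lVert D\vzhatone - \sg(\vzhattwo)\rVert^2$, and expanding the squared norm coordinate-wise with $D$ diagonal gives Eq.~\eqref{eq:leuc}.

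Second, for $\loss$, I would handle numerator and denominator separately. The numerator is $(\Wpred\vzone)^\top\sg(\vztwo) = \vzone{}^\top U D U^\top\sg(\vztwo) = (\vzhatone)^\top D\,\sg(\vzhattwo) = \sum_m \lambda_m \hat z^{(1)}_m\,\sg(\hat z^{(2)}_m)$, again using $U^\top U = I$ and the commutation of $\sg$ with $U^\top$. For the denominator, orthogonal invariance of the Euclidean norm gives $\lVert\Wpred\vzone\rVert = \lVert DU^\top\vzone\rVert = \lVert D\vzhatone\rVert$ and $\lVert\sg(\vztwo)\rVert = \lVert U^\top\sg(\vztwo)\rVert = \lVert\sg(\vzhattwo)\rVert$. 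Combining the three pieces yields the stated expression for $\loss$.

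There is no real obstacle here; the lemma is essentially a statement that the two losses are invariant under the orthogonal change of basis diagonalizing the predictor. The only subtlety worth spelling out is the interaction of the stop-gradient with the basis change: one must note that $U$ is a constant orthogonal matrix under DirectPred, so $\sg$ and $U^\top$ commute at the level of both forward values and gradients, which is what licenses absorbing $U^\top$ into $\sg(\cdot)$. Finally, since each identity holds pointwise for every augmented pair $(\vx^{(1)},\vx^{(2)})$, it persists under the average over augmentations, consistent with the convention that the displayed losses denote expectations over augmented pairs.
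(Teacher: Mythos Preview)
Your proposal is correct and follows essentially the same route as the paper: substitute $\Wpred = UDU^\top$, use orthogonality of $U$ to preserve norms and inner products, and read off the coordinate expressions in the eigenbasis. Your explicit remark that $\sg$ commutes with the constant linear map $U^\top$ is a helpful clarification that the paper leaves implicit.
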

for which we defer the simple proof to Appendix~\ref{appendix:proofs}.
Rewriting the losses in the eigenbasis makes it clear that the asymmetric loss with DirectPred can be viewed as an \emph{implicit} loss function in the predictor's eigenspace, where the variance of each mode naturally appears through the $\lambda_{m}$ terms.
In the following analysis, we will show how the learning dynamics implicitly regularize these variances $\lambda_m$. 
From Eq.~\eqref{eq:leuc} we directly see that $\Leuc$ is a sum of $M$ terms, one for each eigenmode, which decouples the learning dynamics, a fact first noted by \citet{tian2021understanding}.
In contrast, the form of $\loss$ yields coupled dynamics due to the $\Vert D \hat{\vz}^{(1)}\rVert = \sqrt{\sum_k (\lambda_k  \hat{z}^{(1)}_{k})^2}$ term in the denominator. 
This coupling arises from the normalization of the representation vectors to the unit hypersphere when calculating the cosine distance.
The normalization effectively removes one degree of freedom and, in the process, adds a dependence between all the representation dimensions (Fig.~\ref{fig:overview}b and~\ref{fig:overview}c). 

To get an analytic handle on the evolution of the eigen-representations $\vzhat$ as the encoder learns, we first note that if training were to update the representations directly, instead of indirectly through updating the weights $\vtheta$, they would evolve along the following ``representational gradients'':
\begin{align}
    \nabla_{\vzhatone} \Leuc &= \left( D \vzhatone  - \vzhattwo \right) D \label{eq:repgrad_euc} \quad, \\
    \nabla_{\vzhatone} \loss &=  -\frac{D \vzhattwo}{\lVert D\vzhatone \rVert \lVert \vzhattwo \rVert} + \frac{ (D\vzhatone)^\top\vzhattwo}{\lVert D\vzhatone \rVert^3 \lVert \vzhattwo \rVert} D^2\vzhatone \label{eq:repgrad} \quad.
\end{align}

In practice, however, representations of different samples do not evolve independently along these gradients, but influence each other through parameter changes in~$\vtheta$.
This interdependence of representations and parameters are captured by the empirical \ac{NTK} $\Theta_t(\mathcal{X}, \mathcal{X})$ (cf.\ Eq.~\eqref{eq:gradient_flow}).
Because the NTK is positive semi-definite, loosely speaking, gradient descent on the parameters changes representations ``in the direction'' of the above representational gradients.

To see this link more formally, we express the \ac{NTK} in the eigenbasis as $\hat{\Theta}_t(\mathcal{X}, \mathcal{X}) = \nabla_{\vtheta} \mathcal{\hat{\mathcal{Z}}}\nabla_{\vtheta} \mathcal{\hat{\mathcal{Z}}}^{\top}$ where $\hat{\mathcal{Z}} = \hat{z}_t(\mathcal{X}) = U^{\top}z_t(\mathcal{X})$. 
Since we are concerned with the learning dynamics in this rotated basis, we will rewrite Eq.~\eqref{eq:gradient_flow} for continuous-time gradient descent for a generic loss function $\mathcal{L}$ as:
\begin{equation}
    \label{eq:gradient_flow_eig}
    \frac{\der {\vzhat}}{\der t} = -\eta \hat{\Theta}_t(\vx, \mathcal{X}) \nabla_{\hat{\mathcal{Z}}}\mathcal{L} \quad.
\end{equation}
Note, that structurally these dynamics are the same as the embedding space dynamics in Eq.~\eqref{eq:gradient_flow} but merely expressed in the predictor eigenbasis (see Lemma~\ref{lem:ntk_dynamics} in Appendix~\ref{appendix:proofs} for a derivation).
Although $\hat{\Theta}_t$ changes over time and is generally intractable in finite-width networks, it is positive semidefinite.
This property guarantees that the cosine angle between the representational training dynamics under the parameter-space optimization of a neural network $ \frac{\der}{\der t}{\hat{\mathcal{Z}}} \propto - \hat{\Theta}_t \nabla_{\hat{\mathcal{Z}}}\mathcal{L}$ and the dynamics that would result from optimizing the representations $\frac{\der}{\der t}{\hat{\mathcal{Z}}} \propto -\nabla_{\hat{\mathcal{Z}}}\mathcal{L}$ is non-negative:
\begin{equation}
    \left< -\nabla_{\hat{\mathcal{Z}}}\mathcal{L}, \frac{\der{\hat{\mathcal{Z}}}}{\der t} \right> = \eta \left< \nabla_{\hat{\mathcal{Z}}}\mathcal{L}, \hat{\Theta}_t \nabla_{\hat{\mathcal{Z}}}\mathcal{L} \right> \geq 0 .\nonumber
\end{equation}
In other words, the representational updates due to network training lie within a 180-degree cone of the dynamics prescribed by Eqs.~\eqref{eq:repgrad_euc} and~\eqref{eq:repgrad}.
This guarantee makes it possible to draw qualitative conclusions about asymptotic collective behavior, e.g., whether a network is bound to collapse or not, from analyzing the more tractable dynamics that follow the representational gradients $\frac{\der}{\der t}{\hat{\mathcal{Z}}} \propto -\nabla_{\hat{\mathcal{Z}}}\mathcal{L}$ of the transformed BYOL/SimSiam loss.
For ease of analysis, we now consider linear networks with Gaussian i.i.d inputs, an important limiting case amenable for theoretical analysis \citep{saxe2013exact}.
In this setting the empirical NTK becomes the identity and the simplified representational dynamics are exact, allowing us to fully characterize the representational dynamics for $\Leuc$ and $\loss$ in the following two theorems.
In the proofs for these theorems, we show that the assumption of Gaussian inputs can be relaxed further.
\begin{restatable}[]{theorem}{thmdyneuc}
\label{thm:dynamics_linear_iid_l2}
\textnormal{(Representational dynamics under $\Leuc$)} For a linear network with i.i.d Gaussian inputs learning with $\Leuc$, the representational dynamics of each mode $m$ independently follow the gradient of the loss $-\nabla_{\hat{\vz}}\Leuc$. More specifically, the dynamics uncouple and follow $M$ independent differential equations:
\begin{equation}
    \label{eq:dyn_directloss_l2_exact}
    \frac{\der{\hat{z}}^{(1)}_{m}}{\der t} = -\eta \frac{\partial \Leuc}{\partial \hat{z}^{(1)}_{m}}(t) = \eta \lambda_{m} \left( \hat{z}^{(2)}_{m} - \lambda_{m} \hat{z}^{(1)}_{m}\right) \quad,
\end{equation}
which, after taking the expectation over augmentations yields the dynamics:
\begin{equation}
    \label{eq:dyn_directloss_l2}
    \frac{\der{\hat{z}}_{m}}{\der t} = \eta \lambda_{m}\left( 1 - \lambda_{m} \right)\hat{z}_{m} \quad .
\end{equation}
\end{restatable}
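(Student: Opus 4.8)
The plan is to work directly in the predictor eigenbasis, combining three ingredients: (i) the gradient-flow identity~\eqref{eq:gradient_flow_eig}; (ii) a computation showing that for a linear network with i.i.d.\ Gaussian inputs the empirical NTK $\hat{\Theta}_t$ collapses to a scalar multiple of the identity, so that the representational dynamics coincide \emph{exactly} with gradient flow on the transformed loss $\Leuc$ and distinct samples decouple; and (iii) the componentwise representational gradient~\eqref{eq:repgrad_euc}, which separates across modes. Averaging over augmentations then produces~\eqref{eq:dyn_directloss_l2}.

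First I would compute $\nabla_{\vtheta}\vz$ for a linear network. For a single linear layer $\vz = W\vx$ one has $\partial z_i/\partial W_{kl} = \delta_{ik}x_l$, so the empirical NTK factorizes as $\Theta_t\big((i,\vx),(j,\vx')\big) = \delta_{ij}\,\vx^\top\vx'$; in particular it is diagonal in the output index, hence also in the eigen-index after the orthogonal change of basis $\hat{\vz} = U^\top\vz$. For i.i.d.\ Gaussian inputs in dimension $N$, $\tfrac{1}{N}\vx^\top\vx'$ concentrates on $\delta_{\vx\vx'}$, so $\hat{\Theta}_t \to N\, I$ and the scalar $N$ can be absorbed into the learning rate $\eta$. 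This is also the point at which Gaussianity can be relaxed: all one really needs is that the inputs are approximately isotropic and uncorrelated across distinct samples. Substituting $\hat{\Theta}_t = I$ into~\eqref{eq:gradient_flow_eig} then gives $\frac{\der \vzhatone}{\der t} = -\eta\,\nabla_{\vzhatone}\Leuc$, with no cross-sample coupling.

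Next I would substitute the componentwise form of the representational gradient already recorded in~\eqref{eq:repgrad_euc}: since $D$ is diagonal, $\partial\Leuc/\partial\hat{z}^{(1)}_m = \lambda_m\big(\lambda_m\hat{z}^{(1)}_m - \hat{z}^{(2)}_m\big)$ depends only on mode $m$, which yields the $M$ decoupled ODEs $\frac{\der \hat{z}^{(1)}_m}{\der t} = \eta\lambda_m(\hat{z}^{(2)}_m - \lambda_m\hat{z}^{(1)}_m)$, namely Eq.~\eqref{eq:dyn_directloss_l2_exact}. Finally, taking the expectation over the augmentation distribution and using that the two augmented views of a given input are exchangeable, so that $\avg[\hat{z}^{(1)}_m] = \avg[\hat{z}^{(2)}_m] =: \hat{z}_m$, and exchanging $\frac{\der}{\der t}$ with the expectation, the cross term collapses: $\frac{\der \hat{z}_m}{\der t} = \eta\lambda_m(\hat{z}_m - \lambda_m\hat{z}_m) = \eta\lambda_m(1-\lambda_m)\hat{z}_m$, which is Eq.~\eqref{eq:dyn_directloss_l2}.

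The main obstacle is the first step: rigorously justifying that the empirical NTK collapses to a scaled identity. For a single linear layer the only nontrivial ingredient is the concentration of the Gram matrix $\vx^\top\vx'$, which is standard; for deeper linear networks, however, the NTK carries an additional weight-dependent PSD factor that need not be diagonal, so one must either restrict to the single-layer case, impose an alignment or whitening condition on the intermediate weights, or argue that this factor is well-approximated by a multiple of the identity on the relevant timescale. A secondary subtlety --- to be stated carefully rather than genuinely hard --- is the bookkeeping of the stop-gradient when averaging over augmentations: because gradients do not flow through the target branch, the representational gradient at a point involves that point only in its role as the first view, which is exactly what makes the exchangeability argument in the last step go through.
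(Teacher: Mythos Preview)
Your proposal is correct and essentially mirrors the paper's proof: it too computes the NTK blocks for a single linear layer as $(\vx_i^\top\vx_j)I_M$ (via Lemma~\ref{lem:ntk_linear}), invokes the concentration $\vx_i^\top\vx_j\approx\delta_{ij}$ for i.i.d.\ Gaussian inputs to collapse the dynamics to $-\eta\,\nabla_{\hat{\vz}_i}\Leuc$, and then reads off the componentwise gradient and averages over augmentations. Regarding your stated obstacle, the paper sidesteps the deep-linear issue by working with a single linear layer $\vz=W\vx$ throughout; it also notes (as you do) that Gaussianity can be relaxed, phrasing the relaxation as inputs falling into $P$ orthogonal clusters, which merely rescales~\eqref{eq:general_iid_dyn}.
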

We provide the proof in Appendix~\ref{appendix:proofs} and appreciate that $\frac{\der}{\der t}{\hat{z}}_m$ has the same sign as $\hat{z}_m$ whenever $\lambda_m<1$ and the opposite sign whenever $\lambda_m>1$. 
These dynamics are convergent and approach an eigenvalue $\lambda_m$ of one, thereby preventing collapse of mode $m$.
Since the eigenmodes are orthogonal and uncorrelated, and the condition simultaneously holds for all modes, this ultimately prevents both representational and dimensional collapse \citep{jing2021understanding}.
Since the eigenvalues also correspond to the variance of the representations, the underlying mechanism constitutes an \textit{implicit} form of variance regularization.
Finally, we note that the above decoupling of the dynamics for the Euclidean loss has been described previously in \citet{tian2021understanding}.

Nevertheless, the representational dynamics are different for the commonly used cosine loss $\loss$.
\begin{restatable}[]{theorem}{thmdyncos}
\label{thm:dynamics_linear_iid_cos}
\textnormal{(Representational dynamics under $\loss$)} For a linear network with i.i.d Gaussian inputs trained with $\loss$, the dynamics follow a system of $M$ coupled differential equations:
\begin{equation}
    \label{eq:dyn_directloss_cos_exact}
    \frac{\der{\hat{z}}^{(1)}_{m}}{\der t} = \eta\frac{\lambda_m}{\lVert D \hat{\vz}^{(1)}\rVert^3\lVert \hat{\vz}^{(2)}\rVert}
	\sum_{k\neq m}\lambda_k\left(\lambda_k(\hat{z}^{(1)}_k)^2 \hat{z}^{(2)}_m - \lambda_m \hat{z}^{(1)}_m \hat{z}^{(1)}_k \hat{z}^{(2)}_k \right) \quad, 
\end{equation}
and reach a regime in which the eigenvalues are comparable in magnitude. In this regime, the expected update over augmentations is well approximated by:
\begin{equation}
    \label{eq:dyn_directloss_cos}
    \frac{\der {\hat{z}}_{m}}{\der t} \approx \eta\lambda_m\cdot\mathbb{E}\left[\frac{\hat{z}_m^2}{\lVert D \hat{\vz} \rVert^3} \right]\cdot \mathbb{E}\left[\frac{\hat{z}_m}{\lVert \hat{\vz} \rVert} \right]\cdot\sum_{k\neq m}\lambda_k\left(\lambda_k^{} - \lambda_m  \right),
\end{equation}
\end{restatable}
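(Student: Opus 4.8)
The plan is to reduce the dynamics to the representational gradient $\nabla_{\vzhatone}\loss$ of Eq.~\eqref{eq:repgrad} and then carry out the necessary bookkeeping. For a linear network with white (in particular i.i.d.\ Gaussian) inputs the empirical NTK $\hat{\Theta}_t$ is proportional to the identity, so Eq.~\eqref{eq:gradient_flow_eig} collapses to the exact flow $\der\vzhatone/\der t=-\eta\,\nabla_{\vzhatone}\loss$. Substituting Eq.~\eqref{eq:repgrad} and reading off coordinate $m$ gives two contributions, one proportional to $\lambda_m\hat z^{(2)}_m/(\lVert D\vzhatone\rVert\lVert\vzhattwo\rVert)$ and one proportional to $\lambda_m^2\hat z^{(1)}_m\,(D\vzhatone)^\top\vzhattwo/(\lVert D\vzhatone\rVert^3\lVert\vzhattwo\rVert)$. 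I would put both over the common denominator $\lVert D\vzhatone\rVert^3\lVert\vzhattwo\rVert$, expanding $\lVert D\vzhatone\rVert^2=\sum_k\lambda_k^2(\hat z^{(1)}_k)^2$ and $(D\vzhatone)^\top\vzhattwo=\sum_k\lambda_k\hat z^{(1)}_k\hat z^{(2)}_k$. The key observation is that the $k=m$ terms of the two resulting sums are identical and cancel, leaving exactly the coupled system of Eq.~\eqref{eq:dyn_directloss_cos_exact}. This part is an identity and needs only algebra.

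For the averaged dynamics of Eq.~\eqref{eq:dyn_directloss_cos} I would then take the expectation over augmentation pairs. Unlike $\Leuc$, the cosine loss never fully decouples because the denominator mixes all modes, so there is no exact per-mode ODE; the statement is therefore restricted to the regime the theorem names. Near the fixed set where the $\lambda_k$ are of comparable magnitude, the two views stay close to the mean representation, $\vzhatone\approx\vzhattwo\approx\vzhat$, so $\hat z^{(1)}_k\hat z^{(2)}_k\approx\hat z_k^2$ and each summand in Eq.~\eqref{eq:dyn_directloss_cos_exact} reduces to $\lambda_k\hat z_k^2(\lambda_k-\lambda_m)\hat z_m$. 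The remaining step is a mean-field factorization of the expectation: the aggregate factors $\lVert D\vzhat\rVert^{-3}$ and $\lVert\vzhat\rVert^{-1}$ are sums over all $M$ modes and are thus only weakly correlated with any single coordinate, so $\mathbb{E}$ of the product approximately factorizes, and an isotropy approximation across modes then yields the stated product of $\mathbb{E}[\hat z_m^2/\lVert D\vzhat\rVert^3]$ and $\mathbb{E}[\hat z_m/\lVert\vzhat\rVert]$ multiplied by $\sum_{k\neq m}\lambda_k(\lambda_k-\lambda_m)$.

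Finally I would record the consequences used in the main text. Since $\eta\lambda_m>0$, $\mathbb{E}[\hat z_m^2/\lVert D\vzhat\rVert^3]>0$, and $\mathbb{E}[\hat z_m/\lVert\vzhat\rVert]$ carries the sign of $\hat z_m$, the growth or decay of mode $m$ relative to its current value $\hat z_m$ is governed entirely by the sign of $\sum_{k\neq m}\lambda_k(\lambda_k-\lambda_m)$: it is positive when $\lambda_m$ lies below the other eigenvalues (so mode $m$ grows) and negative when it lies above them, and vanishes precisely when all eigenvalues coincide — recovering the $\lambda_1=\lambda_2=\dots$ steady state of Fig.~\ref{fig:overview}c and exhibiting implicit variance \emph{equalization} rather than the variance-to-one mechanism of Theorem~\ref{thm:dynamics_linear_iid_l2}. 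As there, I would also note that the only property of the input distribution actually invoked is that the NTK is proportional to the identity, so the Gaussian hypothesis can be relaxed accordingly.

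The main obstacle is the second paragraph. The passage from the exact coupled system to Eq.~\eqref{eq:dyn_directloss_cos} is not an identity but a controlled approximation, so the real work is (i) to make precise the regime in which it holds — comparable $\lambda_k$, and concentration of the norm terms, which improves with $M$ — and quantify the error, and (ii) to verify that the sign and the overall $\hat z_m$-scaling survive the $\vzhatone\approx\vzhattwo$ replacement, the augmentation average, and the mean-field/isotropy factorization. By contrast, the cancellation of the diagonal terms and the reduction to the representational flow via the identity NTK are routine.
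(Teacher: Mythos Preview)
Your derivation of Eq.~\eqref{eq:dyn_directloss_cos_exact} is exactly what the paper does: reduce to $-\eta\nabla_{\vzhatone}\loss$ via the identity NTK (Lemma~\ref{lem:ntk_linear} and Eq.~\eqref{eq:general_iid_dyn}), put both terms over $\lVert D\vzhatone\rVert^3\lVert\vzhattwo\rVert$, expand the norm and inner product as sums, and observe the $k=m$ cancellation. Nothing to add there.

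For the passage to Eq.~\eqref{eq:dyn_directloss_cos} your route differs from the paper's in one substantive way. You approximate $\vzhatone\approx\vzhattwo\approx\vzhat$ first and then invoke a mean-field decorrelation of the norm factors. The paper instead keeps the two views distinct and uses that the two augmentations are \emph{conditionally independent} given the input: this lets one split, \emph{exactly}, $\mathbb{E}\bigl[(\hat z_k^{(1)})^2\hat z_m^{(2)}/(\lVert D\vzhatone\rVert^3\lVert\vzhattwo\rVert)\bigr]$ into $\mathbb{E}\bigl[(\hat z_k)^2/\lVert D\vzhat\rVert^3\bigr]\cdot\mathbb{E}\bigl[\hat z_m/\lVert\vzhat\rVert\bigr]$ and similarly for the cross term. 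The isotropy step is then the single approximation: writing the relative contributions $\chi_k=|\hat z_k|/\lVert\vzhat\rVert$, one argues that in the near-uniform regime all $\chi_k$ are roughly equal (order $1/\sqrt{M}$) while the $\lambda_k$ carry the dominant scale, so one may replace every $\chi_k$ by a common $\chi$ and collapse the sum to $\sum_{k\neq m}\lambda_k(\lambda_k-\lambda_m)$. Your approach can be made to work, but the paper's conditional-independence factorization is cleaner because it isolates one approximation (isotropy of the $\chi_k$) rather than two (views-close-to-mean \emph{and} mean-field decorrelation). Note also that ``views close to the mean'' is a statement about augmentation strength, not about the eigenvalue regime; you conflate the two when you write that closeness holds ``near the fixed set where the $\lambda_k$ are of comparable magnitude.''

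There is one genuine omission. The theorem asserts that the dynamics \emph{reach} the comparable-magnitude regime, and the paper supports this with a separate asymptotic analysis of the opposite extreme: assuming a single dominant eigenvalue $\lambda_1\gg\lambda_{k\neq1}$, it shows from Eq.~\eqref{eq:dyn_directloss_cos_exact} that the dominant mode decays (at rate $\propto\epsilon^2$) while all subdominant modes grow (at rate $\propto\epsilon$), so the system is driven toward the near-uniform regime where Eq.~\eqref{eq:dyn_directloss_cos} applies. Your sign analysis in the last paragraph establishes stability \emph{within} that regime but not convergence \emph{to} it; you should add the dominant-eigenvalue limit to close this gap.
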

where we have assumed averages over augmentations. See Appendix~\ref{appendix:proofs} for the proof.
Theorem~\ref{thm:dynamics_linear_iid_cos} states that $\frac{\der}{\der t}{\hat{z}}_m$ has the same or different sign as $\hat{z}_m$ depending on the sign of the aggregate sum $\sum_{k\neq m}\lambda_k(\lambda_k-\lambda_m)$.
This relation suggests that a steady state is only reached through mutual agreement when the non-zero eigenvalues are all equal.
In contrast to the Euclidean case, there is no pre-specified target value (see Fig.~\ref{fig:eigvals_different_inits} in Appendix \ref{appendix:supp_fig}).
Thus, the cosine loss also induces implicit variance regularization, but through a markedly different mechanism in which eigenmodes cooperate.

\subsection{Stop-grad and predictor network are essential for implicit variance regularization.}
We now extend our analysis to explain the known failure modes due to ablating the predictor or the stop-gradient for each distance metric.
When we omit the stop-grad operator from $\Leuc$, we have:
\begin{align}
\label{eq:nosg_l2}
 \Leuc_\mathrm{noSG} = \tfrac{1}{2} \lVert \Wpred \vz^{(1)} - \vz^{(2)} \rVert^2 \quad
\Rightarrow  \quad 
\frac{\der{\hat{z}}_m}{\der t} = -\eta \left( 1 - \lambda_m \right)^2\hat{z}_m \quad ,
\end{align}
so that $\frac{\der}{\der t}{\hat{z}}_m$ and $\hat{z}_m$ always have opposite signs (see Appendix~\ref{appendix:derivations_dyn} for the derivation). 
This drives the representations toward zero with exponentially decaying eigenvalues, causing the notorious representational collapse \cite{chen2021exploring}. 
Omitting the stop-grad operator from $\loss$ yields a nontrivial expression for the dynamics causing the largest eigenmode to diverge (see Appendix~\ref{appendix:derivations_dyn}) .
Interestingly, this is different from the collapse to zero inferred for the Euclidean distance. 

Similarly, when removing the predictor network in the Euclidean loss case, the dynamics read: 
\begin{align}
\label{eq:nopred_l2}
\Leuc_\mathrm{noPred} = \tfrac{1}{2} \lVert \boldsymbol{z}^{(1)} - \mathrm{SG}(\boldsymbol{z}^{(2)}) \rVert^2   \quad
\Rightarrow  \quad 
\frac{\der{\hat{z}}_m}{\der t} = 0 \quad ,
\end{align}
meaning that no learning updates occur.
When the predictor is removed in the cosine loss case, the dynamics are:
\begin{align}
\label{eq:nopred_cos}
 \loss_\mathrm{noPred} = -\frac{\left(\boldsymbol{z}^{(1)}\right)^\top \mathrm{SG}(\boldsymbol{z}^{(2)})}{\Vert \boldsymbol{z}^{(1)}\rVert\Vert \mathrm{SG}(\boldsymbol{z}^{(2)})\rVert}
\Rightarrow
\frac{\der{\hat{z}}_m}{\der t} = \eta \sum_{k\neq m}\left( \mathbb{E}\left[ \frac{\hat{z}_k^2}{\lVert \hat{\boldsymbol{z}} \rVert^3} \right]
\mathbb{E}\left[ \frac{\hat{z}_m}{\lVert \hat{\boldsymbol{z}} \rVert} \right]
- \mathbb{E}\left[ \frac{\hat{z}_m\hat{z}_k}{\lVert \hat{\boldsymbol{z}} \rVert^3} \right]
\mathbb{E}\left[ \frac{\hat{z}_k}{\lVert \hat{\boldsymbol{z}} \rVert} \right]
\right).
\end{align}
As we show in Appendix~\ref{appendix:derivations_dyn}, these dynamics also avoid collapse. However, the effective learning rates become impractically small without the eigenvalue factors from Eq.~\eqref{eq:dyn_directloss_cos}.
We summarized the predicted dynamics of all settings in Table~\ref{tab:summary}.
Thus, our analysis provides mechanistic explanations for why stop-grad and predictor networks are required for avoiding collapse in non-contrastive SSL.

\begin{table*}[tbh]
\renewcommand*{\arraystretch}{1.5}
\caption{Summary of eigendynamics as predicted by our analysis for linear networks.}
\label{tab:summary}
\vskip 0.15in
\begin{center}
\begin{small}
\begin{tabular}{lcl}
\toprule
 Loss & $\der \hat{z}_m / \der t \propto$ & Predicted dynamics  \\
\midrule
  $\Leuc$               &  $\lambda_m(1-\lambda_m)$            &  $\lambda$s converge to 1, large ones faster. \\
  $\Leuc_\mathrm{noSG}$ &  $-(1-\lambda_m)^2$                                 &  All $\lambda$s collapse. \\
  $\Leuc_\mathrm{noPred}$ &  $0$                                 &  No learning updates. \\
  $\Leuc_\mathrm{iso}$  &  $(1-\lambda_m)$   &  $\lambda$s converge to 1 at homogeneous rates. \\
\midrule
  $\loss$               &  $\lambda_m\sum_{k\neq m}\lambda_k(\lambda_k-\lambda_m)$            &  $\lambda$s converge to equal values. \\
  $\loss_{\text{noSG}}$ &  Appendix~\ref{appendix:derivations_dyn}                                 &  All $\lambda$s diverge. \\
  $\loss_{\text{noPred}}$ &  Appendix~\ref{appendix:derivations_dyn}                                 &  $\lambda$s converge to equal values at low rates. \\
  $\loss_{\text{iso}}$  &  $\sum_{k\neq m}\lambda_k(\lambda_k-\lambda_m)$   &  $\lambda$s converge to equal values at homogeneous rates. \\
\bottomrule
\end{tabular}
\end{small}
\end{center}
\vskip -0.1in
\end{table*}

\subsection{Isotropic losses that equalize convergence across eigenmodes}

In Eqs.~\eqref{eq:dyn_directloss_l2} and \eqref{eq:dyn_directloss_cos} the eigenvalues appear as multiplicative learning rate modifiers in front of the difference terms that determine the fixed point.
Hence, modes with larger eigenvalues converge faster than modes with smaller eigenvalues, reminiscent of previous theoretical work on supervised learning~\citep{saxe2013exact}. 
We hypothesized that the anisotropy in learning dynamics could lead to slow convergence for small eigenvalue modes or instability for large eigenvalues. 
To alleviate this issue, we designed alternative isotropic loss functions that equalize relaxation dynamics for all eigenmodes by exploiting the stop-grad function.
Put simply, this involves taking the dynamics from Eqs.~\eqref{eq:dyn_directloss_l2_exact} and \eqref{eq:dyn_directloss_cos_exact}, removing the leading $\lambda_m$ term, and deriving the loss function that would result in the desired dynamics.
One such isotropic ``IsoLoss'' function for the Euclidean distance is:
\begin{equation}
    \label{eq:iso}
    \Leuc_\mathrm{iso} = \tfrac{1}{2} \lVert \boldsymbol{z}^{(1)} - \mathrm{SG}(\boldsymbol{z}^{(2)} + \boldsymbol{z}^{(1)} - \Wpred \boldsymbol{z}^{(1)}) \rVert^2.
\end{equation}    
We note that this IsoLoss has the same numerical value as $\Leuc$, but the gradient flow is modified by placing the prediction inside the stop-grad and also adding and subtracting $\boldsymbol{z}^{(1)}$ inside and outside of the stop-grad.
The associated idealized learning dynamics in our analytic framework are given by:
\begin{equation}
    \frac{\der{\hat{z}}_m}{\der t} = \eta \left(1- \lambda_m \right) \hat{z}_m,
\end{equation}
where the $\lambda_{m}$ factor (cf.\ Eq.~\eqref{eq:dyn_directloss_l2}) disappeared  (Table~\ref{tab:summary}).
Similarly, for the cosine distance,
\begin{equation}
    \label{eq:isocos}
    \loss_{\mathrm{iso}} = -(\boldsymbol{z}^{(1)})^\top \mathrm{SG}\left(\frac{\boldsymbol{z}^{(2)}}{\lVert \Wpred \boldsymbol{z}^{(1)} \rVert \lVert \boldsymbol{z}^{(2)} \rVert}\right) + 
    \frac{1}{2} 
    \mathrm{SG}\left(\frac{(\Wpred\boldsymbol{z}^{(1)})^\top \boldsymbol{z}^{(2)}}{\lVert \Wpred \boldsymbol{z}^{(1)} \rVert^3 \lVert \boldsymbol{z}^{(2)} \rVert}\right) \lVert \Wpred^{1/2} \boldsymbol{z}^{(1)} \rVert^2
\end{equation}  
is one possible IsoLoss, in which $\Wpred^{1/2} = UD^{1/2}U^\top$ with the square-root applied element-wise to the diagonal matrix $D$.
While this IsoLoss does not preserve numerical equality with the original loss $\loss$, it achieves the desired effect of removing the leading $\lambda_m$ learning-rate modifier (cf.\ Table~\ref{tab:summary}).

\section{Numerical experiments}

\begin{figure}[tb]
  \centering
  \includegraphics[width=\textwidth]{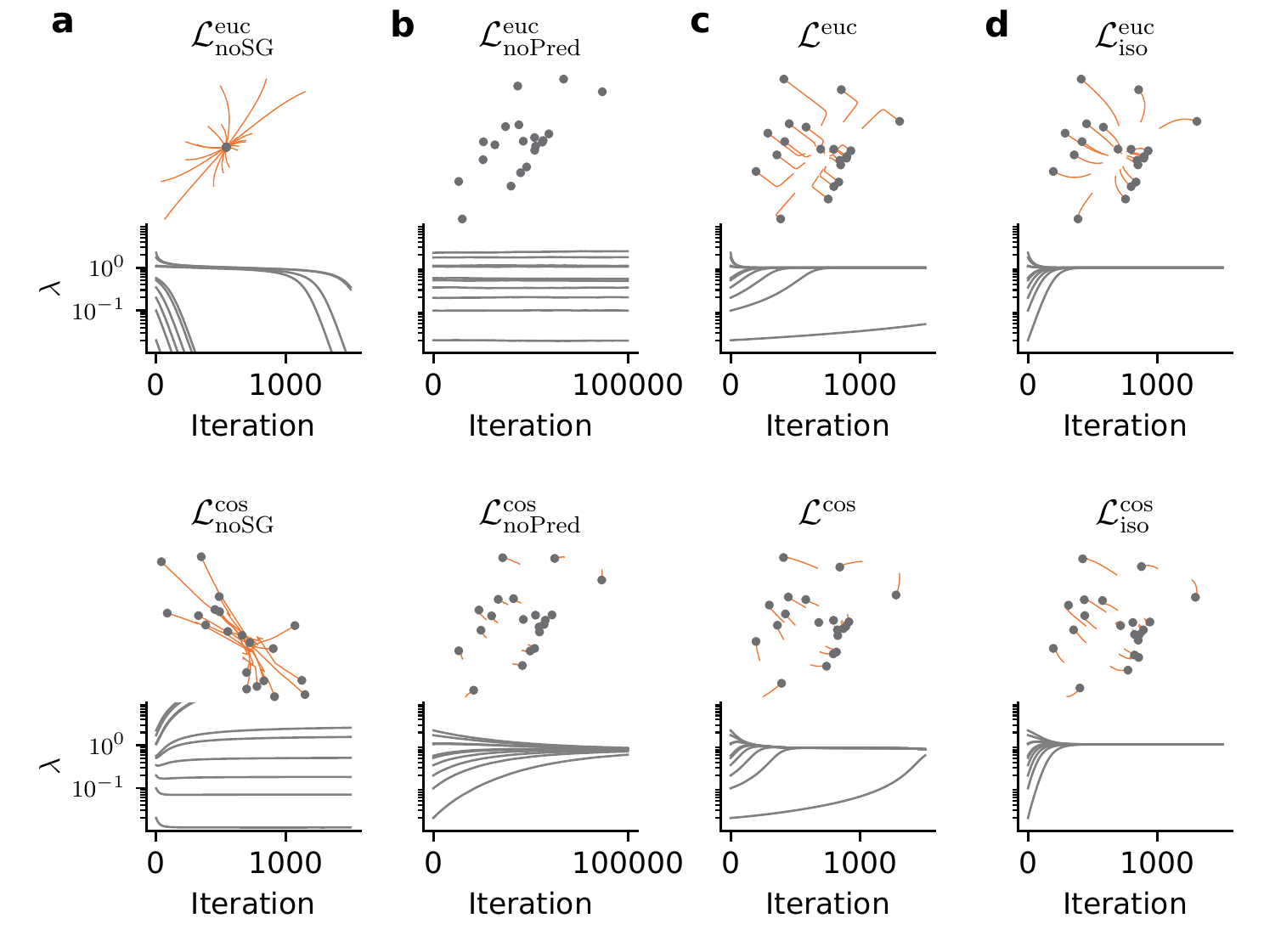}
  \caption{
  Evolution of representations (top) and eigenvalues (below) of $\Wpred$ throughout training with different loss functions. 
  The representational trajectories correspond to training with $M=2$ for visualization and the points signify the final network outputs.
  The eigenvalues were computed with dimensions $N=15$ and $M=10$.
  \textbf{(a)}~Omitting the stop-grad leads to representational collapse in the Euclidean case (top), and diverging eigenvalues for the cosine case (bottom).
  \textbf{(b)}~No learning occurs without the predictor with the Euclidean distance, but learning does occur with the cosine distance, although at low rates. Note the change in scale of the time-axis.
  \textbf{(c)}~Optimizing the \ac{BYOL}/SimSiam loss leads to isotropic representations under both distance metrics.
  \textbf{(d)}~Optimizing IsoLoss has the same effect, but with uniform convergence dynamics for all eigenvalues for both distance metrics.
  }
  \label{fig:toy_exp}
\end{figure}

To validate our theoretical findings (cf.\ Table~\ref{tab:summary}), we first simulated a small linear Siamese neural network as shown in Fig.\ref{fig:overview}a, for which Theorems~\ref{thm:dynamics_linear_iid_l2} and~\ref{thm:dynamics_linear_iid_cos} hold exactly. 
We fed the network with independent standard Gaussian inputs, and generated pairs of augmentations using isotropic Gaussian perturbations of standard deviation $\sigma=0.1$.
We then trained the linear encoder with each configuration described above. 
Training the network with $\Leuc_\mathrm{noSG}$ resulted in collapse with exponentially decaying eigenvalues, whereas $\loss_\mathrm{noSG}$ succumbed to diverging eigenvalues as predicted (Fig.~\ref{fig:toy_exp}a).
Training without the predictor caused vanishing updates for $\Leuc_\mathrm{noPred}$ and slow learning for $\loss_\mathrm{noPred}$, in line with our analysis  (Fig.~\ref{fig:toy_exp}b).
Optimizing $\Leuc$, the representations become increasingly isotropic with all the eigenvalues $\lambda_{m}$ converging to one (Fig.~\ref{fig:toy_exp}c, top), whereas optimizing $\loss$ also resulted in the eigenvalues converging to the same value, but different from one (Fig.~\ref{fig:toy_exp}c, bottom).
The anisotropy in the dynamics of different eigenmodes noted above is particularly striking in the case of the Euclidean distance (Fig.~\ref{fig:toy_exp}c).
Training with $\Leuc_\mathrm{iso}$ and $\loss_\mathrm{iso}$ resulted in similar convergence properties as their non-isotropic counterparts, but the eigenmodes converged at more homogeneous rates (Fig.~\ref{fig:toy_exp}d).
Finally, we confirmed that these findings were qualitatively similar in the corresponding nonlinear networks with ReLU activation (see Fig.~\ref{fig:toy_exp_nonlin} in Appendix~\ref{appendix:supp_fig}).
Thus, our theoretical findings hold up in simple Siamese networks.

\subsection{Theory qualitatively captures dynamics in nonlinear networks and real-world datasets.}

To investigate how well our theoretical analysis holds up in non-toy settings, we performed several self-supervised learning experiments on CIFAR-10, CIFAR-100~\citep{krizhevsky2009learning}, STL-10~\citep{coates2011analysis}, and TinyImageNet~\citep{le2015tiny}.
We based our implementation\footnote{Code is available at \url{https://github.com/fmi-basel/implicit-var-reg}} on the Solo-learn library~\citep{costa2022sololearn}, and used a  ResNet-18 backbone~\citep{he2016deep} as the encoder and the cosine loss, unless mentioned otherwise (see Appendix~\ref{appendix:experimental} for details).
As baselines for comparison, we trained the same backbone using \ac{BYOL} with the nonlinear predictor and DirectPred with the closed-form linear predictor.
We recorded the online readout accuracy of a linear classifier trained on frozen features following standard practice, evaluated either on the held-out validation or test set where available.

\begin{figure}[tb]
  \centering
  \includegraphics[width=\textwidth]{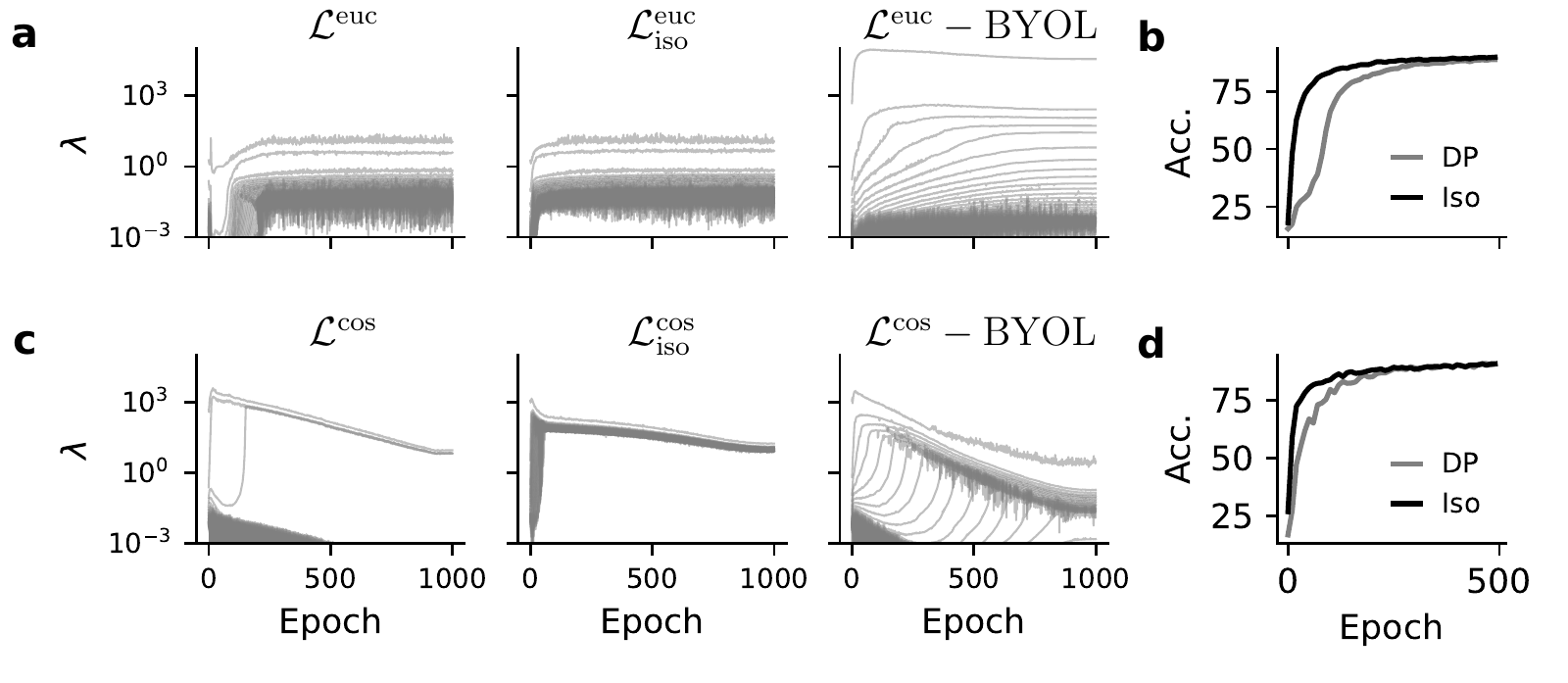}
  \caption{Learning dynamics for a ResNet-18 network trained with different loss functions. 
  \textbf{(a)}~Evolution of the eigenvalues of the representation correlation matrix during training for closed-form predictors as prescribed by DirectPred  (left) and IsoLoss (center). 
  Right: Standard BYOL with the nonlinear trainable predictor.
  For clarity, we plot only one in ten eigenvalues.
  Both $\Leuc$ and $\Leuc_\mathrm{iso}$ drive the eigenvalues to converge quickly and remain constant thereafter with relatively small fluctuations (note the logarithmic scale).
  \ac{BYOL} results in the eigenvalues being spread across a large range of magnitudes.
  \textbf{(b)}~Linear readout validation accuracy for $\Leuc$ and $\Leuc_\mathrm{iso}$ during the first 500~training epochs. IsoLoss accelerates the initial learning dynamics as predicted by the theory.
  \textbf{(c)}~Same as in~(a) but for the cosine distance.
  $\loss$ recruits few large eigenvalues, but drives them gradually to the same magnitude, whereas $\loss_\mathrm{iso}$ quickly recruits \emph{all} eigenvalues and causing them to converge to an isotropic solution.
  In contrast, \ac{BYOL} recruits eigenvalues in a step-wise manner.
  \textbf{(d)}~Same as~(b) but for the cosine distance.}
  \label{fig:deep_nets}
\end{figure}

We found that the eigenvalue dynamics of the representational correlation matrix in the ResNet-18 closely mirrored the analytical predictions for the closed-form predictor.
For Euclidean distances (Fig.~\ref{fig:deep_nets}a), the eigenvalues for DirectPred and IsoLoss converged to a small range of values around one.
However, the dynamics for \ac{BYOL} with a learnable nonlinear predictor deviated significantly with the eigenvalues distributed over a larger range.
Consistent with our analysis, IsoLoss had faster initial dynamics for the eigenvalues which also resulted in a faster initial improvements in model performance (Fig.~\ref{fig:deep_nets}b).
The faster learning with IsoLoss was even more evident for the cosine distance (Fig.~\ref{fig:deep_nets}c).
Surprisingly, \ac{BYOL}, which uses a nonlinear predictor also closely matched the predicted dynamics in the case of the cosine distance.
Furthermore, the dynamics showed a stepwise learning phenomenon wherein eigenvalues are progressively recruited one-by-one, consistent with recent findings for other \ac{SSL} methods \citep{simon2023stepwise}.
Finally, IsoLoss exhibited faster initial learning (Fig.~\ref{fig:deep_nets}d), in agreement with our theoretical analysis.
Thus, our theoretical analysis accurately predicts key properties of the eigenvalue dynamics in nonlinear networks trained on real-world datasets.

\subsection{IsoLoss  promotes eigenvalue recruitment and works without an EMA target network.}

To further investigate the impact of IsoLoss on learning, we first verified that it does not have any adverse effects on downstream classification performance.
We found that IsoLoss matched or outperformed DirectPred 
on all benchmarks (Table~\ref{tab:results}) when trained with an \ac{EMA} target network as used in the original studies.
Yet, it performed slightly worse than BYOL, which uses a nonlinear predictor and an \ac{EMA} target network.
Because EMA target networks are thought to amplify small eigenvalues  \citep{tian2021understanding}, we speculated that IsoLoss may work without it.
We repeated training for the closed-form predictor losses without \ac{EMA} to test this idea.
We found that $\loss_{\text{iso}}$ was indeed robust to \ac{EMA} removal. However, it caused a slight drop in performance (Table~\ref{tab:results}) and a notable reduction in the recruitment of small eigenvalues (see Fig.~\ref{fig:noema} in Appendix~\ref{appendix:supp_fig}).
In contrast, optimizing the standard \ac{BYOL}/SimSiam loss $\loss$ with the symmetric linear predictor was unstable, as reported previously~\citep{tian2021understanding}.
Finally, we confirmed the above findings also hold for $\alpha=1$ (cf.\ Eq.~\eqref{eq:directpred}) as prescribed by DirectCopy \citep{wang2021towards} (see Table~\ref{tab:results_extra} in Appendix~\ref{appendix:supp_fig}).
Thus, IsoLoss allows training without an \ac{EMA} target network.

\begin{table*}[tbh]
\caption{Linear readout validation accuracy in \% $\pm$ stddev over five random seeds. 
The ${\dagger}$ denotes crashed runs, known to occur with symmetric predictors like DirectPred \cite{tian2021understanding}.
Starred values $^*$ were taken from the Solo-learn library \citep{costa2022sololearn}.}
\label{tab:results}
\vskip 0.15in
\begin{center}
\begin{small}
\begin{tabular}{lccccc}
\toprule
Model & EMA & CIFAR-10 & CIFAR-100 & STL-10 & TinyImageNet \\
\midrule
\ac{BYOL} & Yes & 92.6$^*$ & 70.5$^*$ & 91.7 $\pm$ 0.1 & 38.3 $\pm$ 1.5 \\
SimSiam & No & 90.7 $\pm$ 0.2 & 66.3 $\pm$ 0.4 & 87.5 $\pm$ 0.7 & 39.8 $\pm$ 0.6 \\
\midrule
\multirow{2}{*}{DirectPred ($\alpha=0.5$)} & Yes & 92.0 $\pm$ 0.2 & 66.6 $\pm$ 0.5 & 88.8 $\pm$ 0.3 & 40.1 $\pm$ 0.5 \\
 & No & 12.1 $\pm$ 1.3$^{\dagger}$ & 1.6 $\pm$ 0.6$^{\dagger}$ & 10.4 $\pm$ 0.1$^{\dagger}$ & 1.3 $\pm$ 0.2$^{\dagger}$ \\
 \midrule
\multirow{2}{*}{IsoLoss (ours)} & Yes & 91.5 $\pm$ 0.2 & 69.0 $\pm$ 0.2 & 89.0 $\pm$ 0.3 & 44.8 $\pm$ 0.4 \\
 & No & 91.5 $\pm$ 0.2 & 64.3 $\pm$ 0.3 & 87.4 $\pm$ 0.1 & 40.4 $\pm$ 0.4 \\
\bottomrule
\end{tabular}\\
\end{small}
\end{center}
\vskip -0.1in
\end{table*}

The above result suggests that IsoLoss promotes the recruitment of small eigenvalues in closed-form predictors.
Another factor that has been implicated in suppressing recruitment is weight decay \citep{wang2021towards}.
To probe how weight decay and IsoLoss affect small eigenvalue recruitment,   
we repeated the above simulations with EMA and different amounts of weight decay.
Indeed, we observed less eigenvalue recruitment with increasing weight decay for DirectPred (Appendix~\ref{appendix:supp_fig}, Fig.~\ref{fig:wd}a), but 
not for IsoLoss (Fig.~\ref{fig:wd}b). 
However, for IsoLoss larger weight decay resulted in lower magnitudes of \emph{all} eigenvalues.
Hence, IsoLoss reduces the impact of weight decay on eigenvalue recruitment.

\section{Discussion}
\label{sec:discussion}

We provided a comprehensive analysis of the \ac{SSL} representational dynamics in the eigenspace of closed-form linear predictor networks (i.e., DirectPred and DirectCopy) for both the Euclidean loss and the more commonly used cosine similarity.
Our analysis revealed how asymmetric losses prevent representational and dimensional collapse through \emph{implicit} variance regularization along orthogonal eigenmodes, thereby formally linking predictor-based \ac{SSL} with explicit variance regularization approaches \cite{zbontar2021barlow, bardes2021vicreg, halvagal2022combination}.
Our work provides a theory framework which further complements the growing body of work linking contrastive and non-contrastive \ac{SSL} \citep{garrido2022duality, balestriero2022contrastive, dubois2022improving, pokle2022contrasting, zhang2022does}.

We empirically validated the key predictions of our analysis in linear and nonlinear network models on several datasets, including CIFAR-10/100, STL-10, and TinyImageNet.
Moreover, we found that the eigenvalues of the predictor network act as learning rate multipliers, causing anisotropic learning dynamics.
We derived Euclidean and cosine IsoLosses, which counteract this anisotropy and enable closed-form linear predictor methods to work without an EMA target network, thereby 
further consolidating its presumed role in boosting small eigenvalues \citep{tian2021understanding}.

To our knowledge, this is the first work to comprehensively characterize asymmetric \ac{SSL} learning dynamics for the cosine distance metric widely used in practice. 
However, our analysis rests on several assumptions. 
First, the analytic link through the \ac{NTK} between gradient descent on parameters and the representational changes is an approximation in nonlinear networks. 
Moreover, we assumed Gaussian i.i.d inputs for proving Theorems~1 and~2. 
Although these assumptions generally do not hold in nonlinear networks, our analysis qualitatively captures their overall learning behavior
and predicts how networks respond to changes in the stop-grad placement. 

In summary, we have provided a simple theoretical explanation of how asymmetric loss configurations prevent representational collapse in SSL and elucidate their inherent dependence on the placement of the stop-grad operation. 
We further demonstrated how the eigenspace framework allows crafting new loss functions with a distinct impact on the \ac{SSL} learning dynamics. We provided one specific example of such loss functions, IsoLoss, which equalizes the learning dynamics in the predictor's eigenspace, resulting in faster initial learning and improved stability.
In contrast to DirectPred, IsoLoss learns stably without an EMA target network.
Our work thus lays out an effective framework for analyzing and developing new SSL loss functions.

\begin{ack}
This project was supported by the Swiss National Science Foundation [grant numbers PCEFP3\_202981 and TMPFP3\_210282], 
by EU’s Horizon Europe Research and Innovation Programme (grant agreement number 101070374) funded through SERI (ref 1131-52302), 
and the Novartis Research Foundation.
The authors declare no competing interests.
\end{ack}




\end{bibunit}


\newpage
\appendix
\begin{bibunit}
\section{Supplementary Material}
\label{appendix:supp_fig}

\begin{figure}[tbh]
  \centering
  \includegraphics{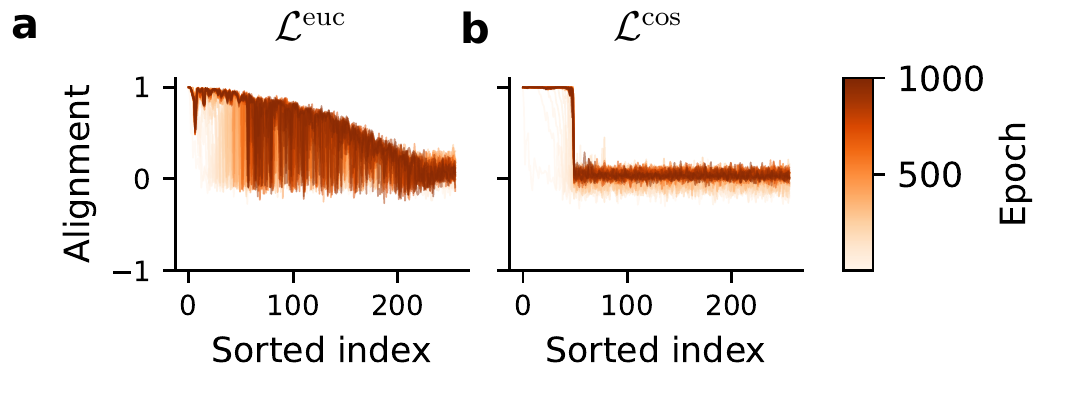}
  \caption{
  Eigenspace alignment between $C_{\vz}=\avg_{\vx}\left[ \vz \vz^\top \right]$ and a linear predictor $W_\mathrm{P}$ trained with gradient descent.
  Following \citep{tian2021understanding}, we measure eigenspace alignment as the cosine between $\boldsymbol{u}_i$ and $W_\mathrm{P}\boldsymbol{u}_i$ for every eigenvector $\boldsymbol{u}_i$ of $C_{\vz}$.
  \textbf{(a)} Measured alignment for every eigenvector of $C_{\vz}$ ordered by sorted eigenvalue indices over training epochs, when the network is trained with $\Leuc$.
  \textbf{(b)} Same as \textbf{(a)} but for training with $\loss$.
  }
  \label{fig:eigvec_align}
\end{figure}

\begin{figure}[tbh]
  \centering
  \includegraphics{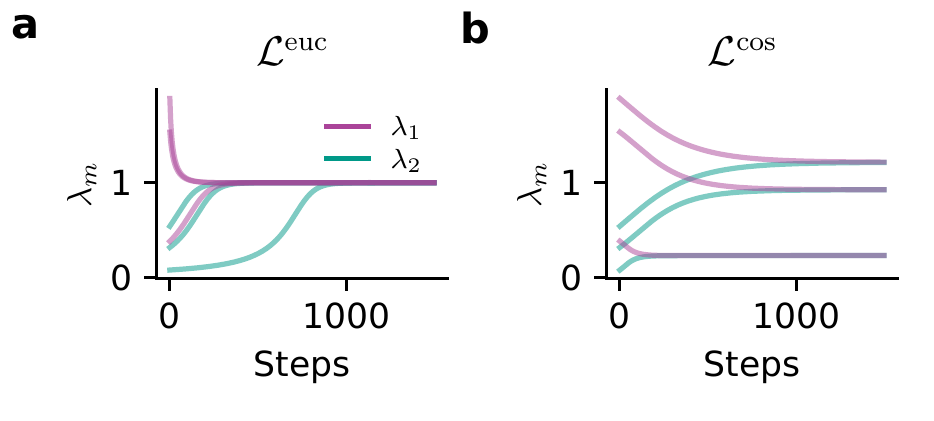}
  \caption{
  Comparison between the dynamics under Euclidean and cosine asymmetric losses for different initializations in a network with $M=2$ output neurons.
  \textbf{(a)} Observed dynamics of the eigenvalues in the two-neuron toy network under three different initializations.
  Both eigenvalues always converge to 1 regardless of the initialization.
  \textbf{(b)} Same as \textbf{(a)}, but for the cosine distance. Under different initializations, the two eigenvalues converge to arbitrary, but equal, values.
  }
  \label{fig:eigvals_different_inits}
\end{figure}

\begin{figure}[tbh]
  \centering
  \includegraphics[width=\textwidth]{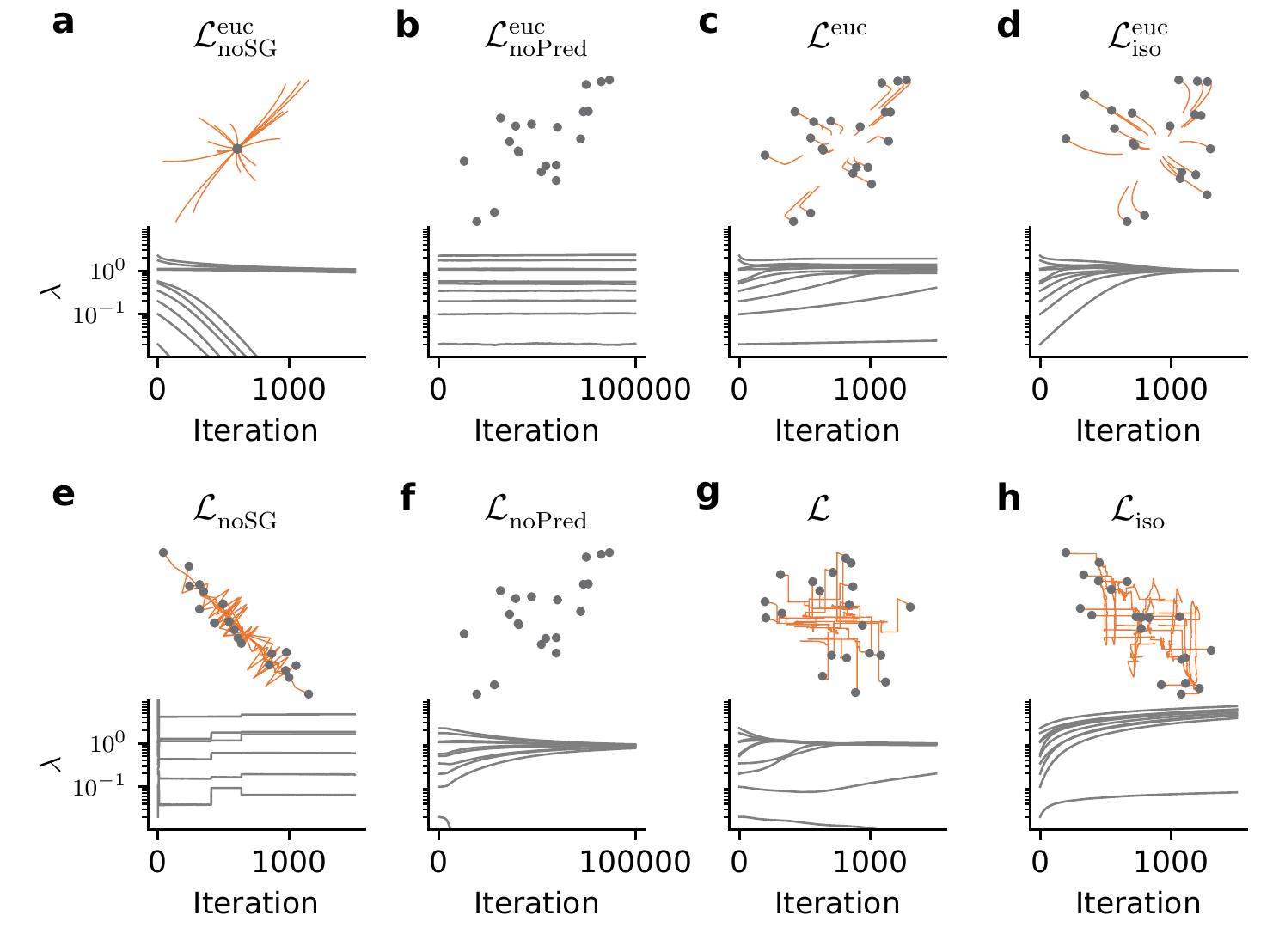}
  \caption{
  Same as Fig.~\ref{fig:toy_exp} but with a ReLU nonlinearity on the embeddings. We observe learning dynamics qualitatively similar to the linear network.
  }
  \label{fig:toy_exp_nonlin}
\end{figure}

\begin{figure}[tbh]
  \centering
  \includegraphics{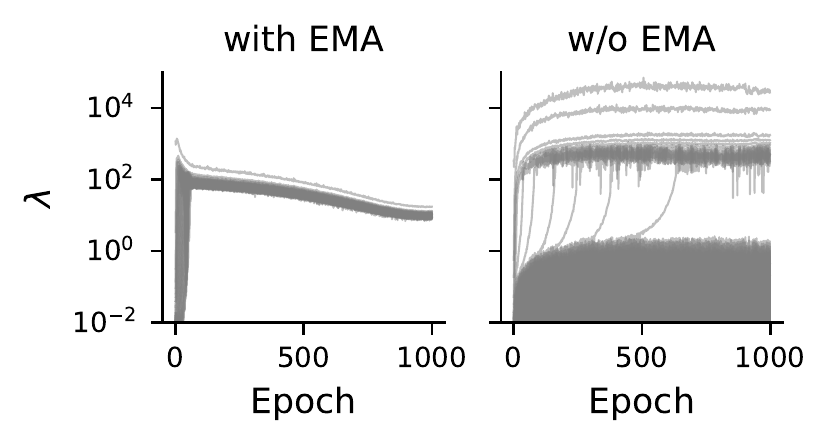}
  \caption{
  Eigenvalue dynamics for learning under IsoLoss ($\loss_{\rm iso}$) with and without the \ac{EMA} target network.
   Removing the EMA results in markedly different dynamics with fewer eigenmodes recruited during training.
  }
  \label{fig:noema}
\end{figure}

\begin{figure}[tbh]
  \centering
  \includegraphics{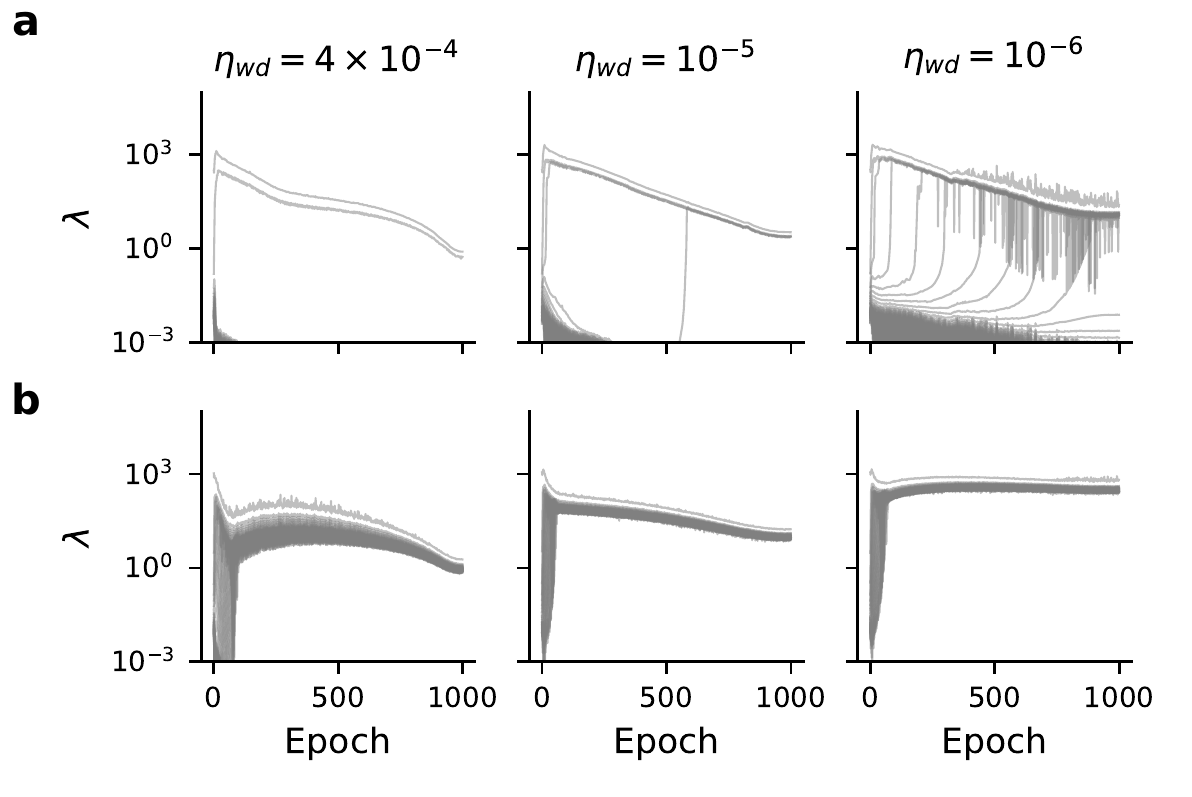}
  \caption{
  Effect of weight decay on eigenvalue recruitment for DirectPred and IsoLoss.
  \textbf{(a)} Evolution of eigenvalues during learning under DirectPred ($\loss$) with \ac{EMA} and different amounts of weight decay.
  Decreasing weight decay correlates with the number of eigenvalues recruited during learning.
  \textbf{(b)} Same as \textbf{(a)} but for IsoLoss ($\loss_{\rm iso}$).
   All eigenvalues are recruited independent of the strength of weight decay.
   However, the magnitude of the eigenvalues inversely correlates with the magnitude of weight-decay.
  }
  \label{fig:wd}
\end{figure}

\begin{table*}[tbh]
\caption{Linear readout validation accuracy in \% $\pm$ stddev over five random seeds. 
The ${\dagger}$ denotes crashed runs, known to occur with symmetric predictors like DirectPred \cite{tian2021understanding}.}
\label{tab:results_extra}
\vskip 0.15in
\begin{center}
\begin{small}
\begin{tabular}{lcccccc}
\toprule
Model & $\alpha$ & EMA & CIFAR-10 & CIFAR-100 & STL-10 & TinyImageNet \\
\midrule
\multirow{2}{*}{DirectCopy} & \multirow{2}{*}{1} & Yes & 91.3 $\pm$ 0.2 & 68.7 $\pm$ 0.3 & 89.3 $\pm$ 0.2 & 45.3 $\pm$ 0.3 \\
& & No & 12.1 $\pm$ 0.6$^{\dagger}$ & 1.5 $\pm$ 0.5$^{\dagger}$ & 10.3 $\pm$ 0.1$^{\dagger}$ & 0.6 $\pm$ 0.1$^{\dagger}$ \\
\midrule
\multirow{2}{*}{DirectPred} & \multirow{2}{*}{0.5} & Yes & 92.0 $\pm$ 0.2 & 66.6 $\pm$ 0.5 & 88.8 $\pm$ 0.3 & 40.1 $\pm$ 0.5 \\
& & No & 12.1 $\pm$ 1.3$^{\dagger}$ & 1.6 $\pm$ 0.6$^{\dagger}$ & 10.4 $\pm$ 0.1$^{\dagger}$ & 1.3 $\pm$ 0.2$^{\dagger}$ \\
 \midrule
\multirow{4}{*}{IsoLoss (ours)} & \multirow{2}{*}{1} & Yes & 91.5 $\pm$ 0.2 & 69.3 $\pm$ 0.2 & 89.1 $\pm$ 0.3 & 45.6 $\pm$ 0.9 \\
& & No & 91.4 $\pm$ 0.2 & 63.0 $\pm$ 0.3 & 86.9 $\pm$ 0.4 & 38.5 $\pm$ 0.5 \\
\cline{2-7}
& \multirow{2}{*}{0.5} & Yes & 91.5 $\pm$ 0.2 & 69.0 $\pm$ 0.2 & 89.0 $\pm$ 0.3 & 44.8 $\pm$ 0.4 \\
& & No & 91.5 $\pm$ 0.2 & 64.3 $\pm$ 0.3 & 87.4 $\pm$ 0.1 & 40.4 $\pm$ 0.4 \\
\bottomrule
\end{tabular}\\
\end{small}
\end{center}
\vskip -0.1in
\end{table*}

\clearpage

\section{Proofs}
\label{appendix:proofs}

\lemdirectloss*
\begin{proof}
Under DirectPred, the predictor is a symmetric matrix with eigendecomposition $\Wpred=UDU^\top$. Since $U$ is an orthogonal matrix, we also have $UU^\top=I$ so that we can simplify the losses as follows:
\begin{align}
    \label{eq:directloss}
    \Leuc &= \tfrac{1}{2} \lVert W_\mathrm{P} \boldsymbol{z}^{(1)} - \mathrm{SG}(\boldsymbol{z}^{(2)}) \rVert^2 \nonumber \\
        &= \tfrac{1}{2} \lVert UD U^{\top} \boldsymbol{z}^{(1)} - \mathrm{SG}(UU^{\top}\boldsymbol{z}^{(2)}) \rVert^2 \nonumber \\
        &= \tfrac{1}{2} \lVert D \hat{\boldsymbol{z}}^{(1)} - \mathrm{SG}(\hat{\boldsymbol{z}}^{(2)}) \rVert^2 \nonumber \\
        &= \tfrac{1}{2} \sum_m^{M} | \lambda_m \hat{z}^{(1)}_m - \mathrm{SG}(\hat{z}^{(2)}_m) |^2 \nonumber \\
        \nonumber \\
    \mathcal{L} &= -\frac{\left(W_\mathrm{P} \boldsymbol{z}^{(1)}\right)^\top\:\mathrm{SG}(\boldsymbol{z}^{(2)})}{\Vert W_\mathrm{P} \boldsymbol{z}^{(1)}\rVert\Vert \mathrm{SG}(\boldsymbol{z}^{(2)})\rVert} \nonumber \\
	    &= -\frac{(\boldsymbol{z}^{(1)})^\top UD U^\top \:\mathrm{SG}(\boldsymbol{z}^{(2)})}{\Vert U D U^\top\boldsymbol{z}^{(1)}\rVert\Vert \mathrm{SG}(UU^\top\boldsymbol{z}^{(2)})\rVert} \nonumber \\
	    &= -\frac{(\vzhatone)^\top D \:\mathrm{SG}(\vzhattwo)}{\Vert D \vzhatone\rVert\Vert \mathrm{SG}(\vzhattwo)\rVert} \nonumber \\
	    &= -\sum_m^{M} \frac{\lambda_m\hat{z}^{(1)}_m \:\mathrm{SG}(\hat{z}^{(2)}_m)}{\Vert D \vzhatone\rVert\Vert \mathrm{SG}(\vzhattwo)\rVert} \quad, \nonumber 
\end{align}
where we used the fact that $U$ is orthogonal and therefore does not change the Euclidean norm.
$\hat{\boldsymbol{z}} = U^{\top}\boldsymbol{z}$ is the representation rotated into the eigenbasis. 
\end{proof}

\newpage

\begin{restatable}[]{lemma}{lemntkdyn}
\label{lem:ntk_dynamics}
\textnormal{(Learning dynamics in a rotated basis)} 
Assuming that a given loss $\mathcal{L}$ is optimized by gradient descent on the parameters of a neural network with network outputs $\vz$, a given orthogonal transformation $\vzhat = U^\top z$ and learning rate $\eta$, then the rotated representations $\vzhat$ evolve according to the dynamics:
\begin{equation}
    \frac{\der \hat{\vz}}{\der t} = -\eta \hat{\Theta}_t(\vx, \mathcal{X}) \nabla_{\hat{\mathcal{Z}}}\mathcal{L} \quad, \nonumber
\end{equation}
where $\hat{\Theta}_t(\mathcal{X}, \mathcal{X}) = \nabla_{\vtheta} \mathcal{\hat{\mathcal{Z}}}\nabla_{\vtheta} \mathcal{\hat{\mathcal{Z}}}^{\top}$ is the empirical \ac{NTK} expressed in the rotated basis.
\end{restatable}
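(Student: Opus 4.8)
The plan is to show that the gradient-flow dynamics of the rotated representations $\hat{\vz}=U^\top\vz$ are simply the $U$-rotation of the ordinary NTK dynamics for $\vz$, and that the right-hand side can be written intrinsically in terms of the rotated NTK $\hat{\Theta}_t$ and the rotated loss gradient $\nabla_{\hat{\mathcal{Z}}}\mathcal{L}$. First I would recall that under continuous-time gradient descent on the parameters, $\frac{\der\vtheta}{\der t}=-\eta\nabla_{\vtheta}\mathcal{L}$, and by the chain rule the network outputs on the full dataset evolve as $\frac{\der\mathcal{Z}}{\der t}=\nabla_{\vtheta}\mathcal{Z}\,\frac{\der\vtheta}{\der t}=-\eta\,\nabla_{\vtheta}\mathcal{Z}\,\nabla_{\vtheta}\mathcal{L}$, and since $\mathcal{L}$ depends on $\vtheta$ only through $\mathcal{Z}$, $\nabla_{\vtheta}\mathcal{L}=\nabla_{\vtheta}\mathcal{Z}\,\nabla_{\mathcal{Z}}\mathcal{L}$ (treating $\nabla_{\vtheta}\mathcal{Z}$ as the Jacobian), giving the standard $\frac{\der\mathcal{Z}}{\der t}=-\eta\,\Theta_t(\mathcal{X},\mathcal{X})\nabla_{\mathcal{Z}}\mathcal{L}$, and restricting to a single input $\vx$ gives Eq.~\eqref{eq:gradient_flow}.

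Next I would perform the change of variables. Since $\hat{\mathcal{Z}}=(I_{|\mathtt{D}|}\otimes U^\top)\mathcal{Z}$ is a fixed linear (orthogonal) map, we have $\frac{\der\hat{\vz}}{\der t}=U^\top\frac{\der\vz}{\der t}=-\eta\,U^\top\Theta_t(\vx,\mathcal{X})\nabla_{\mathcal{Z}}\mathcal{L}$. It remains to recognise this as $-\eta\,\hat{\Theta}_t(\vx,\mathcal{X})\nabla_{\hat{\mathcal{Z}}}\mathcal{L}$. For this I would use two facts: (i) $\nabla_{\vtheta}\hat{\mathcal{Z}}=\nabla_{\vtheta}\mathcal{Z}\,(I\otimes U)$ by the chain rule, so $\hat{\Theta}_t=\nabla_{\vtheta}\hat{\mathcal{Z}}\nabla_{\vtheta}\hat{\mathcal{Z}}^\top=(I\otimes U^\top)\Theta_t(I\otimes U)$ (on the appropriate index blocks this reads $\hat{\Theta}_t(\vx,\mathcal{X})=U^\top\Theta_t(\vx,\mathcal{X})(I\otimes U)$); and (ii) since $\mathcal{Z}=(I\otimes U)\hat{\mathcal{Z}}$, the chain rule gives $\nabla_{\hat{\mathcal{Z}}}\mathcal{L}=(I\otimes U^\top)\nabla_{\mathcal{Z}}\mathcal{L}$, equivalently $\nabla_{\mathcal{Z}}\mathcal{L}=(I\otimes U)\nabla_{\hat{\mathcal{Z}}}\mathcal{L}$. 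Substituting (ii) into the expression above yields $\frac{\der\hat{\vz}}{\der t}=-\eta\,U^\top\Theta_t(\vx,\mathcal{X})(I\otimes U)\nabla_{\hat{\mathcal{Z}}}\mathcal{L}=-\eta\,\hat{\Theta}_t(\vx,\mathcal{X})\nabla_{\hat{\mathcal{Z}}}\mathcal{L}$ by (i), which is the claim.

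The only real subtlety — and the step I would be most careful with — is bookkeeping the Kronecker-product/block structure: $\Theta_t$ acts on the $M|\mathtt{D}|$-dimensional concatenated output space, so the rotation that must be applied is $I_{|\mathtt{D}|}\otimes U$ rather than $U$ alone, and one has to check that ``$U^\top\Theta_t(\vx,\mathcal{X})$'' in the single-sample dynamics means rotating the $M$ output coordinates of $\vx$ on the left while rotating each of the $|\mathtt{D}|$ output blocks of $\mathcal{X}$ on the right. Once this index discipline is fixed, everything is a one-line chain-rule computation; there is no analytic difficulty. I would therefore present the proof as: (1) state parameter gradient flow and derive the output dynamics via chain rule; (2) note $\hat{\mathcal{Z}}$ is a fixed orthogonal reparametrisation and record the two chain-rule identities for $\nabla_{\vtheta}\hat{\mathcal{Z}}$ and $\nabla_{\hat{\mathcal{Z}}}\mathcal{L}$; (3) substitute and collect terms to obtain the stated dynamics, remarking that positive semidefiniteness of $\hat{\Theta}_t$ is inherited from $\Theta_t$ since conjugation by an orthogonal matrix preserves it.
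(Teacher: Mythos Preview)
Your proposal is correct and uses essentially the same chain-rule/gradient-flow argument as the paper; the only difference is organizational: the paper applies the chain rule directly in the rotated basis (writing $\frac{\der\hat{\vz}}{\der t}=\nabla_{\vtheta}\hat{\vz}\,\frac{\der\vtheta}{\der t}=-\eta\,\nabla_{\vtheta}\hat{\vz}\,\nabla_{\vtheta}\hat{\mathcal{Z}}^{\top}\nabla_{\hat{\mathcal{Z}}}\mathcal{L}=-\eta\,\hat{\Theta}_t(\vx,\mathcal{X})\nabla_{\hat{\mathcal{Z}}}\mathcal{L}$) rather than first deriving the unrotated dynamics and then conjugating by $I_{|\mathtt{D}|}\otimes U$. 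Your explicit identity $\hat{\Theta}_t=(I\otimes U^{\top})\Theta_t(I\otimes U)$ is a small bonus the paper does not state, but be careful that your Jacobian conventions are consistent with the paper's definition $\Theta_t=\nabla_{\vtheta}\mathcal{Z}\,\nabla_{\vtheta}\mathcal{Z}^{\top}$ (i.e., $\nabla_{\vtheta}\mathcal{Z}$ of shape $(M|\mathtt{D}|)\times P$), since your line $\nabla_{\vtheta}\hat{\mathcal{Z}}=\nabla_{\vtheta}\mathcal{Z}\,(I\otimes U)$ and $\nabla_{\vtheta}\mathcal{L}=\nabla_{\vtheta}\mathcal{Z}\,\nabla_{\mathcal{Z}}\mathcal{L}$ implicitly use the transposed convention.
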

\begin{proof}
Let $\boldsymbol{\theta}$ be the parameters of the neural network.
Then we obtain the representational dynamics using the chain rule in the continuous-time gradient-flow setting \citep{lee_wide_2019}:
\begin{align}
    \frac{\der\hat{\boldsymbol{z}}}{\der t} &= \nabla_{\vtheta} \hat{\boldsymbol{z}} \frac{\der{\vtheta}}{\der t} \nonumber \\
    &= \nabla_{\vtheta} \hat{\boldsymbol{z}} \left( -\eta \nabla_{\vtheta}\mathcal{L} \right) \nonumber \\
    &= \nabla_{\vtheta} \hat{\boldsymbol{z}} \left( -\eta \nabla_{\vtheta} \mathcal{\hat{\mathcal{Z}}}^{\top} \nabla_{\hat{\mathcal{Z}}}\mathcal{L} \right) \nonumber \\
    &= -\eta \hat{\Theta}_t(x, \mathcal{X}) \nabla_{\hat{\mathcal{Z}}}\mathcal{L} \quad. \nonumber
\end{align}
The above is a reiteration of the derivation of Eq.~\eqref{eq:gradient_flow} given by \citet{lee_wide_2019}, with an additional orthogonal transformation on the network outputs.
\end{proof}

We proceed by proving the following Lemma which we will use in our proofs of  Theorems~\ref{thm:dynamics_linear_iid_l2} and~\ref{thm:dynamics_linear_iid_cos}.

\begin{restatable}[]{lemma}{lemntklinear}
    \label{lem:ntk_linear}
    The NTK for a linear network is invariant under orthogonal transformations of the network output.
\end{restatable}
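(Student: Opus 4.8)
The plan is to show that for a linear network $\vz = W\vx$ (or more generally a composition of linear maps), the empirical NTK depends on the output only through linear combinations of input-derived quantities, and that pre-composing the output with a fixed orthogonal matrix $U^\top$ simply conjugates the kernel by $U^\top$, which is the identity operation on the relevant scalar kernel. Concretely, for a single linear layer $\vz = W\vx$ with parameters $\vtheta = \mathrm{vec}(W)$, one has $\nabla_{\vtheta} z_i = \vx \otimes \boldsymbol{e}_i$, so the empirical NTK factorizes as $\Theta_t(\vx,\vx') = (\vx^\top \vx')\, I_M$. The key observation is that this is proportional to the $M\times M$ identity in output space, with the data-dependent scalar $\vx^\top\vx'$ carrying all the information.

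First I would write $\hat{\vz} = U^\top \vz = U^\top W \vx =: \tilde W \vx$, noting that passing to the rotated basis is equivalent to reparametrizing the weight matrix as $\tilde W = U^\top W$, which is again just a linear layer. Then I would compute $\hat\Theta_t(\vx,\vx') = \nabla_{\vtheta}\hat{\mathcal{Z}}\,\nabla_{\vtheta}\hat{\mathcal{Z}}^\top$ directly: since $\hat z_i = \sum_{j} (U^\top)_{ij} z_j$ and $z_j$ is linear in $W$, we get $\nabla_W \hat z_i = U_{\cdot i}\, \vx^\top$ (outer product), hence $\hat\Theta_t(\vx,\vx')_{ik} = \langle U_{\cdot i}\vx^\top, U_{\cdot k}\vx'^\top\rangle = (U_{\cdot i}^\top U_{\cdot k})(\vx^\top\vx') = \delta_{ik}(\vx^\top\vx')$, using orthonormality of the columns of $U$. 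This is exactly the same kernel as in the unrotated basis, so in particular when inputs are normalized/i.i.d.\ Gaussian it reduces to the identity, justifying the claim used in Theorems~\ref{thm:dynamics_linear_iid_l2} and~\ref{thm:dynamics_linear_iid_cos}. For a deep linear network $\vz = W_L\cdots W_1\vx$ the same argument applies layerwise: the contribution of each layer's parameters to $\nabla_{\vtheta}\hat{\mathcal{Z}}$ picks up a factor $U^\top$ on the left relative to $\nabla_{\vtheta}\mathcal{Z}$, so $\hat\Theta_t = U^\top \Theta_t U$ as operators on the stacked output space (block-structured over data points); since for a linear net $\Theta_t$ is block-proportional to $I_M$ within each data-point pair block, conjugation by the orthogonal $U$ leaves it unchanged.

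The main obstacle, such as it is, is bookkeeping with the Kronecker/stacked-vector structure: the NTK is an $(M|\mathtt D|\times M|\mathtt D|)$ object and one must be careful that the orthogonal transformation $U$ acts only on the $M$-dimensional output index and commutes with the per-sample structure, so that "conjugation by $U$" is well-defined and really does act blockwise. I would handle this by fixing a single pair of inputs $(\vx,\vx')$ and working with the $M\times M$ block, then noting the argument is uniform over all pairs. A secondary point worth stating explicitly is \emph{why} for a linear network the per-pair block is proportional to $I_M$ — this is exactly the computation above showing $\nabla_{\vtheta} z_i$ and $\nabla_{\vtheta} z_k$ are orthogonal for $i\neq k$ (they live in disjoint coordinate blocks of $W$, or more invariantly differ by the output selector $\boldsymbol{e}_i$ vs $\boldsymbol{e}_k$) and have squared norm independent of $i$. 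Once that is in hand, invariance under the orthogonal $U$ is immediate, and combined with Lemma~\ref{lem:ntk_dynamics} this licenses replacing $\hat\Theta_t$ by (a scalar multiple of) the identity in the rotated-basis dynamics.
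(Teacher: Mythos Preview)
Your proposal is correct and follows essentially the same approach as the paper: compute the Jacobian of the rotated output with respect to the linear weights, contract to obtain the per-pair $M\times M$ block, and observe that orthonormality of $U$ collapses the output-space factor to $I_M$, leaving $(\vx_i^\top\vx_j)I_M$ independently of $U$. The only cosmetic difference is that the paper packages the computation via Kronecker-product identities ($\nabla_W\hat{\vz}=\vx^\top\otimes U^\top$ and $(A\otimes B)(C\otimes D)=AC\otimes BD$) rather than your index-wise argument, and it treats only the single-layer case whereas you sketch the deep-linear extension; neither difference is substantive.
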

\begin{proof}
We first note that for a linear network, the parameters $\boldsymbol{\theta}$ are just the feedforward weights $W$. Therefore, for any orthogonal transformation $U$ of the network output:
\begin{align}
    \hat{\boldsymbol{z}} &= U^\top f(\boldsymbol{x}) = U^\top W \boldsymbol{x} \nonumber \\
    \Rightarrow \nabla_{\theta} \hat{\boldsymbol{z}} &= \nabla_{W} \hat{\boldsymbol{z}} = \nabla_{W} \left(U^\top W \boldsymbol{x} \right) = \boldsymbol{x}^\top \otimes U^\top ,
\end{align}
where $\otimes$ is the Kronecker product resulting from the fact that every input vector component appears in the update once for each output component. 

We now study $\hat{\Theta}_t(\mathcal{X}, \mathcal{X})$, the transformed empirical \ac{NTK} (cf.\ Lemma~\ref{lem:ntk_dynamics}). The $(M\times M)$ diagonal blocks in the full $(M|\mathtt{D}|\times M|\mathtt{D}|)$ empirical NTK $\hat{\Theta}_t(\mathcal{X}, \mathcal{X})$ correspond to single samples and the off-diagonal blocks are cross-terms between samples, where $|\mathtt{D}|$ denotes the size of the training dataset and $M$ the dimension of the outputs. We can develop a generic expression for each ($M\times M$) block $\hat{\Theta}_t(\boldsymbol{x}_i, \boldsymbol{x}_j)$ corresponding to the interactions between samples $i$ and $j$ as:
\begin{align}
\label{app_eq:linearNTK}
    \hat{\Theta}_t(\boldsymbol{x}_i, \boldsymbol{x}_j) &= \nabla_{W} \hat{\boldsymbol{z}}_i\nabla_{W} \hat{\boldsymbol{z}}_j^{\top} \nonumber \\
    &= \left(\boldsymbol{x}_i^\top \otimes U^\top \right) \left(\boldsymbol{x}_j^\top \otimes U^\top \right)^\top \nonumber \\
    &= \left(\boldsymbol{x}_i^\top \otimes U^\top \right) \left(\boldsymbol{x}_j \otimes U\right) \nonumber \\
    &= \left(\boldsymbol{x}_i^\top \boldsymbol{x}_j\right) \otimes \left( U^\top U\right) \nonumber \\
    &= \left(\boldsymbol{x}_i^\top \boldsymbol{x}_j\right) \otimes I_\mathrm{M} \nonumber \\
    &= \left(\boldsymbol{x}_i^\top \boldsymbol{x}_j\right)I_\mathrm{M} .
\end{align}
where we have used the fact that $(A\otimes B)^\top = A^\top \otimes B^\top$ and $(A\otimes B)(C\otimes D) = AC\otimes BD$. 
Here, $I_\mathrm{M}$ is the identity matrix of size $M$. Noting that Eq.~\eqref{app_eq:linearNTK} is unchanged when $U$ is just the identity matrix completes the proof.
\end{proof}

\newpage

\thmdyneuc*
\begin{proof}
    For a linear network with weights $W\in \mathbb{R}^{M\times N}$, we have from Lemma~\ref{lem:ntk_linear} that the empirical \ac{NTK} $\hat{\Theta}(\mathcal{X},\mathcal{X})$ in the orthogonal eigenbasis is equal to the empirical \ac{NTK} ${\Theta}(\mathcal{X},\mathcal{X})$ in the original basis. Furthermore from the proof for the lemma (see Eq.~\eqref{app_eq:linearNTK} above), each $(M\times M)$ block of the full $(M|\mathtt{D}|\times M|\mathtt{D}|)$ empirical \ac{NTK} is given by:
    \begin{equation}
        \label{eq:ntk_simplified}
        \hat{\Theta}_t(\boldsymbol{x}_i, \boldsymbol{x}_j) = \left(\boldsymbol{x}_i^\top \boldsymbol{x}_j\right)I_\mathrm{M} .
    \end{equation}
    where $I_M\in \mathbb{R}^{M\times M}$ is the identity.
     Eq.~\eqref{eq:ntk_simplified} gives the total effective interaction between the samples $i$ and $j$ from the dataset. For high-dimensional inputs $\boldsymbol{x}$ drawn from an i.i.d standard Gaussian distribution, we have $\boldsymbol{x}_i^\top \boldsymbol{x}_j \approx \delta_{ij}$ by the central limit theorem. Therefore, in the special case of a linear network with Gaussian i.i.d inputs, the representational dynamics (Lemma~\ref{lem:ntk_dynamics}) simplify as follows:
    \begin{align}
    \label{eq:general_iid_dyn}
        \frac{\der{\hat{\boldsymbol{z}}}_{i}^{(1)}}{\der t} &= -\eta \hat{\Theta}_t(\boldsymbol{x}_i, \mathcal{X}) \nabla_{\hat{\mathcal{Z}}}\mathcal{L} \nonumber \\
        &= -\eta \hat{\Theta}_t(\boldsymbol{x}_i, \boldsymbol{x}_i) \nabla_{\hat{\boldsymbol{z}}_i}\mathcal{L} - \eta \sum_{j\neq i} \hat{\Theta}_t(\boldsymbol{x}_i, \boldsymbol{x}_j) \nabla_{\hat{\boldsymbol{z}}_j}\mathcal{L}\nonumber \\
        &= -\eta \left(\boldsymbol{x}_i^\top \boldsymbol{x}_i\right) \nabla_{\hat{\boldsymbol{z}}_i}\mathcal{L} - \eta \sum_{j\neq i} \left(\boldsymbol{x}_i^\top \boldsymbol{x}_j\right) \nabla_{\hat{\boldsymbol{z}}_j}\mathcal{L}\nonumber \\
        &= -\eta \nabla_{\hat{\boldsymbol{z}}_i}\mathcal{L} \quad .
    \end{align}
    
While the assumption of Gaussian i.i.d inputs is quite restrictive, we offer a generalizing interpretation here.
Specifically, the above argument also holds when the inputs $\boldsymbol{x}$ are not all mutually orthogonal, but fall into
$P$ orthogonal clusters in the input dataset.
Then, we would have $\boldsymbol{x}_i^\top \boldsymbol{x}_j \approx \delta_{p_i=p_j}$ where $p_i$ is the ``label'' of the cluster corresponding to sample $i$.
If $\mathcal{P}_i$ is the number of all the samples with the same label $p_i$, then Eq.~\eqref{eq:general_iid_dyn} would simply be scaled to give $\tfrac{\der{\hat{\boldsymbol{z}}}_{i}^{(1)}}{\der t} = -\eta \mathcal{P}_i\nabla_{\hat{\boldsymbol{z}}_i}\mathcal{L}$.

For brevity, we proceed with the simplest case Eq.~\eqref{eq:general_iid_dyn} in which every input is orthogonal. For $\Leuc$, the representational gradient $\nabla_{\hat{\boldsymbol{z}}_i}\mathcal{L}$ is then given by:
    \begin{equation*}
        \nabla_{\hat{\boldsymbol{z}}_i}\Leuc = \left(D_t \hat{\boldsymbol{z}}_{i}^{(1)} - \hat{\boldsymbol{z}}_{i}^{(2)} \right) D_t
    \end{equation*}
Noting that $D_t$ is just a diagonal matrix containing the eigenvalues $\lambda_{m}$ and dropping the sample subscript $i$ for notational ease, we obtain for the $m$-th component of $\nabla_{\hat{\boldsymbol{z}}_i}\Leuc$:
\begin{equation*}
    \frac{\partial \Leuc}{\partial \hat{z}_{m}} = \lambda_{m}(\lambda_{m} \hat{z}_{m}^{(1)} - \hat{z}_{m}^{(2)}) \quad .
\end{equation*}
Substituting this result in Eq.~\eqref{eq:general_iid_dyn} gives us  Eq.~\eqref{eq:dyn_directloss_l2_exact}, the expression we were looking for.
Finally, introducing 
$\hat{z}_{m} \equiv \mathbb{E}[\hat{z}^{(1)}_{m}] = \mathbb{E}[\hat{z}^{(2)}_{m}]$ as the expectation over augmentations, we find that each eigenmode evolves independently in expectation value as: 
\begin{align}
    \mathbb{E}\left[\frac{\der {\hat{z}}^{(1)}_{m}}{\der t}\right] = \frac{\der {\hat{z}}_{m}}{\der t} &= \eta \lambda_m \left( \mathbb{E}[\hat{z}^{(2)}_{m}] - \lambda_m \mathbb{E}[\hat{z}^{(1)}_{m}]\right) \nonumber \\
    &= \eta \lambda_{m}\left( 1 - \lambda_{m} \right)\hat{z}_{m} \quad . \nonumber
\end{align}
\end{proof}

\newpage

\thmdyncos*
\begin{proof}
We can retrace the steps from the proof for Theorem~\ref{thm:dynamics_linear_iid_l2} until Eq.~\eqref{eq:general_iid_dyn}:
\begin{equation*}
    \frac{\der{\vzhat}_{i}^{(1)}}{\der t} = -\eta \nabla_{\vzhat_i}\mathcal{L} \quad .
\end{equation*}
$\nabla_{\vzhat_i}\mathcal{L}$ is a vector of dimension $M$. Ignoring the sample subscript $i$ for simplicity, and focusing on the $m$-th component of $\nabla_{\vzhat_i}\mathcal{L}$, we get:
\begin{align}
\loss &= -\sum_m^{M} \frac{\lambda_m\hat{z}^{(1)}_m \mathrm{SG}(\hat{z}^{(2)}_m)}{\Vert D \hat{\vz}^{(1)}\rVert\Vert \mathrm{SG}(\hat{\vz}^{(2)})\rVert} \nonumber \\
\Rightarrow	\frac{\partial \mathcal{L}}{\partial \hat{z}^{(1)}_m} &= - \frac{\lambda_m\hat{z}^{(2)}_m}{\lVert D \vzhatone\rVert\lVert \vzhattwo\rVert} + \frac{\sum_k\lambda_k \hat{z}^{(1)}_k\hat{z}^{(2)}_k}{\lVert D \vzhatone\rVert^3\lVert \vzhattwo\rVert}\cdot\lambda_m^{2} \hat{z}^{(1)}_m \nonumber \\
	&= -\frac{\lambda_m}{\lVert D \vzhatone\rVert^3\lVert \vzhattwo\rVert}
	\left[\lVert D \vzhatone\rVert^2\hat{z}^{(2)}_m - \lambda_m^{}\hat{z}^{(1)}_m\left(\sum_k\lambda_k\hat{z}^{(1)}_k\hat{z}^{(2)}_k \right)  \right] \nonumber \\
	&= -\frac{\lambda_m}{\lVert D \vzhatone\rVert^3\lVert \vzhattwo\rVert}
	\left[\left(\sum_k \lambda_k^{2}(\hat{z}^{(1)}_k)^2 \right)\hat{z}^{(2)}_m - \lambda_m^{}\hat{z}^{(1)}_m\left(\sum_k\lambda_k\hat{z}^{(1)}_k\hat{z}^{(2)}_k \right)  \right]
	 \nonumber \\
	&=-\frac{\lambda_m}{\lVert D \vzhatone\rVert^3\lVert \vzhattwo\rVert}
	\sum_{k\neq m}\lambda_k\left(\lambda_k (\hat{z}^{(1)}_k)^2 \hat{z}^{(2)}_m - \lambda_m \hat{z}^{(1)}_m \hat{z}^{(1)}_k \hat{z}^{(2)}_k \right) \nonumber \\
 \Rightarrow \frac{\der{\hat{z}}^{(1)}_{m}}{\der t} 
    &= -\eta \frac{\partial \mathcal{L}}{\partial \hat{z}^{(1)}_m}  \nonumber \\
    & = \frac{\eta\lambda_m}{\lVert D \vzhatone\rVert^3\lVert \vzhattwo\rVert}
	\sum_{k\neq m}\lambda_k\left(\lambda_k (\hat{z}^{(1)}_k)^2 \hat{z}^{(2)}_m - \lambda_m \hat{z}^{(1)}_m \hat{z}^{(1)}_k \hat{z}^{(2)}_k \right) \quad , \nonumber
\end{align}
proving Eq.~\eqref{eq:dyn_directloss_cos_exact}. 
Assuming sufficiently small augmentations, $\hat{z}_k^{(1)}$ and $\hat{z}_k^{(2)}$ carry the same sign, and the net sign of both terms inside the parenthesis is fully determined by $\gamma_m \equiv \mathrm{sign}(\hat{z}_m^{(1)})$.
Hence, we may write:
\begin{equation*}
    \frac{\der{\hat{z}}^{(1)}_{m}}{\der t} = \frac{\eta\lambda_m\gamma_m}{\lVert D \vzhatone\rVert^3\lVert \vzhattwo\rVert}
	\sum_{k\neq m}\left(\lambda_k^{2} (\hat{z}^{(1)}_k)^2 |\hat{z}^{(2)}_m| - \lambda_m \lambda_k |\hat{z}^{(1)}_m| |\hat{z}^{(1)}_k| |\hat{z}^{(2)}_k| \right) \quad . \nonumber
\end{equation*}
It is useful to separate out $\gamma_m$ in this manner because every other term in the expression is now non-negative.
Then $\mathrm{sign}(\gamma_m \cdot\frac{\der{\hat{z}}}{\der t}) = \mathrm{sign}(\hat{z}_m \cdot \frac{\der{\hat{z}}_{m}}{\der t})$ tells us whether $\hat{z}_m$ tends to increase or decrease in magnitude, as we have argued in the main text.

\paragraph{Asymptotic analysis.}
To get a handle on how the different eigenvalues influence each other, we consider two important limiting cases.
First, we consider the asymptotic regime dominated by one eigenvalue, and show that it tends towards a more symmetric solution in which the gap between different eigenvalues decreases.
Second, we derive asymptotic expressions for the near-uniform regime in which all eigenvalues are comparable in size and show that this solution tends toward the uniform solution (cf.\  Eq.~\eqref{eq:dyn_directloss_cos}). 

To facilitate our analysis, we define each mode's relative contribution $\chi_m \equiv \frac{|\hat{z}_m|}{\lVert \vzhat \rVert}$ and evaluate Eq.~\eqref{eq:dyn_directloss_cos_exact} taking the expectation value over augmentations:
\begin{align}
\label{eq:dyn_derivation_cos}
    \mathbb{E}\left[\frac{\der{\hat{z}}_{m}^{(1)}}{\der t} \right] &= \eta\lambda_m\sum_{k\neq m}\left(\lambda_k^{2} \cdot\mathbb{E}\left[\frac{(\hat{z}^{(1)}_k)^2 \hat{z}^{(2)}_m}{\lVert D \vzhat^{(1)} \rVert^3\lVert \vzhattwo \rVert} \right] - \lambda_m \lambda_k \cdot \mathbb{E}\left[\frac{\hat{z}_m^{(1)}\hat{z}_k^{(1)} \hat{z}_k^{(2)}}{\lVert D \vzhatone \rVert^3\lVert \vzhattwo \rVert} \right] \right) \nonumber \\
    \frac{\der {\hat{z}}_{m}}{\der t} &= \eta\lambda_m\sum_{k\neq m}\left(\lambda_k^{2} \cdot\mathbb{E}\left[\frac{\hat{z}^2_k}{\lVert D \vzhat \rVert^3} \right]\cdot \mathbb{E}\left[\frac{\hat{z}_m}{\lVert \vzhat \rVert} \right] - \lambda_m \lambda_k \cdot \mathbb{E}\left[\frac{\hat{z}_m\hat{z}_k}{\lVert D \vzhat \rVert^3} \right] \cdot \mathbb{E}\left[ \frac{\hat{z}_k}{\lVert \vzhat \rVert} \right] \right) \nonumber \\
	&=\eta\lambda_m\gamma_m\sum_{k\neq m}\left(\lambda_k^{2} \cdot\mathbb{E}\left[ \chi_k^2 \frac{\lVert \vzhat \rVert^2}{\lVert D \vzhat \rVert^3} \right]\cdot \mathbb{E}\left[\chi_m \right] - \lambda_m \lambda_k \cdot \mathbb{E}\left[ \chi_m\chi_k \frac{\lVert \vzhat \rVert^2}{\lVert D \vzhat \rVert^3} \right] \cdot \mathbb{E}\left[ \chi_k \right] \right).
\end{align}
In the second equality we used the fact that the expectation value taken over augmentations is conditioned on the input sample, which makes them conditionally independent.

\paragraph{One dominant eigenvalue.}
First, we consider the low-rank regime in which one eigenvalue dominates.
Without loss of generality, we assume
$\lambda_{1}\gg\lambda_{k} \:\forall k\neq 1$.
We then have: 
\begin{align*}
    \chi_{1} &\sim 1 \\
    \chi_{k} &\sim \epsilon  \quad (0 < \epsilon \ll 1) \quad \forall \quad k\neq 1
\end{align*}
Plugging these values into Eq.~\eqref{eq:dyn_derivation_cos} gives the following dynamics for the dominant eigenmode:
\begin{align*}
\frac{\der {\hat{z}}_{1}}{\der t}
	&\approx \eta\lambda_{1}\gamma_{1}\sum_{k\neq {1}}\left(\lambda_k^{2} \cdot\mathbb{E}\left[\epsilon^2 \frac{\lVert \vzhat \rVert^2}{\lVert D \vzhat \rVert^3} \right]\cdot \mathbb{E}\left[1 \right] - \lambda_{1} \lambda_k \cdot \mathbb{E}\left[\epsilon \frac{\lVert \vzhat \rVert^2}{\lVert D \vzhat \rVert^3} \right] \mathbb{E}\left[\epsilon\right] \right) \\
 &= \eta\lambda_{1}\gamma_{1}\sum_{k\neq {1}}\left(\lambda_k^{2} \epsilon^2 \cdot\mathbb{E}\left[ \frac{\lVert \vzhat \rVert^2}{\lVert D \vzhat \rVert^3} \right]\cdot \mathbb{E}\left[1 \right] - \lambda_{1} \lambda_k \epsilon^2 \cdot \mathbb{E}\left[ \frac{\lVert \vzhat \rVert^2}{\lVert D \vzhat \rVert^3} \right] \right) \\
&= \eta \lambda_{1}\gamma_{1}\epsilon^{2}\mathbb{E}\left[ \frac{\lVert \vzhat \rVert^2}{\lVert D \vzhat \rVert^3} \right]\sum\limits_{k\neq 1}\lambda_k\left(\lambda_k - \lambda_{1} \right) \quad.
\end{align*}
These updates are always opposite in sign to the representation component,  which corresponds to decaying dynamics for the leading eigenmode because $\gamma_{1}\frac{\der {\hat{z}}_{1}}{\der t} < 0$. 

For all other modes we have:
\begin{align*}
\frac{\der {\hat{z}}_{m\neq 1}}{\der t} &\approx \eta\lambda_m\gamma_m\sum_{k\not\in \{m,1\}}\left(\lambda_k^{2}\epsilon^2  \cdot\mathbb{E}\left[ \frac{\lVert \vzhat \rVert^2}{\lVert D \vzhat \rVert^3} \right]\cdot \mathbb{E}\left[\epsilon \right] - \lambda_m \lambda_k \epsilon^2 \cdot \mathbb{E}\left[ \frac{\lVert \vzhat \rVert^2}{\lVert D \vzhat \rVert^3} \right] \cdot \mathbb{E}\left[ \epsilon \right] \right) \\
& \quad + \eta \lambda_m \gamma_m \left(\lambda_{1}^2 \cdot\mathbb{E}\left[ \frac{\lVert \vzhat \rVert^2}{\lVert D \vzhat \rVert^3} \right]\cdot \mathbb{E}\left[\epsilon \right] - \lambda_m \lambda_{1} \epsilon \cdot \mathbb{E}\left[ \frac{\lVert \vzhat \rVert^2}{\lVert D \vzhat \rVert^3} \right] \cdot \mathbb{E}\left[ 1 \right] \right) \\
&= \eta \lambda_m\gamma_m \epsilon \cdot \mathbb{E}\left[ \frac{\lVert \vzhat \rVert^2}{\lVert D \vzhat \rVert^3} \right] \left( \lambda_{1}(\lambda_{1} - \lambda_m) + \epsilon^2\sum_{k\not\in \{m,1\}} \lambda_k(\lambda_k-\lambda_m) \right) \\
&\approx \eta \lambda_m \gamma_m \lambda_{1} \epsilon \cdot \mathbb{E}\left[ \frac{\lVert \vzhat \rVert^2}{\lVert D \vzhat \rVert^3} \right] (\lambda_{1} - \lambda_m) \quad ,
\end{align*}
so that $\gamma_{m}\frac{\der {\hat{z}}_{m}}{\der t} > 0$, i.e, the updates have the \emph{same} sign as the representation component, which corresponds to growth dynamics.
In other words: The dominant eigenvalue ``pulls all the other eigenvalues up,'' a form of implicit cooperation between the eigenmodes.
We also note that the non-dominant eigenmodes increase at a rate proportional to $\epsilon$, whereas the dominant eigenmode decreases at a slower rate proportional to $\epsilon^2$.
Thus, for sensible initializations with at least one large and many small eigenvalues, the modes will tend toward an equilibrium at some non-zero intermediate value, without a dominant mode.
Next we study this other limiting case in which all eigenvalues are of similar size.  

\paragraph{Near-uniform regime.}
To study the dynamics in a near-uniform regime, we note that all $\chi_m$ are of order $\mathcal{O}(1)$ in $\hat{z}_m$, whereas the eigenvalues $\lambda_m$ are of order $\mathcal{O}(\hat{z}_m^2)$.
In this setting, the effect of the eigenvalue terms $\lambda_m$ on the dynamics is stronger than the $\chi_m$ terms which are bounded between 0 and 1. 
With a sufficiently high-dimensional representation, all $\chi_m$ terms will be centered around $1/\sqrt{M}$. 
Based on these observations, we may make the simplifying assumption that the contributions are all approximately equal, i.e, $\chi_i=\chi$ for all $i$. Substituting this value in Eq~\eqref{eq:dyn_derivation_cos} gives:
\begin{equation}
    \frac{\der {\hat{z}}_{m}}{\der t} =\eta\lambda_m\gamma_m\cdot\mathbb{E}\left[ \chi^2 \frac{\lVert \vzhat \rVert^2}{\lVert D \vzhat \rVert^3} \right]\cdot \mathbb{E}\left[\chi \right]\cdot\sum_{k\neq m}\lambda_k\left(\lambda_k^{} - \lambda_m  \right) \quad.
\end{equation}
Finally, substituting for $\chi$, which by assumption are all approximately equal:
\begin{equation*}
    \chi \approx \chi_m = \frac{|\hat{z}_m|}{\lVert \vzhat \rVert} \quad,
\end{equation*}
and absorbing back the sign from $\gamma_m$, we obtain the approximate dynamics in Eq~\eqref{eq:dyn_directloss_cos}:
\begin{equation*}
    \frac{\der {\hat{z}}_{m}}{\der t} \approx \eta\lambda_m\cdot\mathbb{E}\left[\frac{\hat{z}_m^2}{\lVert D \hat{\vz} \rVert^3} \right]\cdot \mathbb{E}\left[\frac{\hat{z}_m}{\lVert \hat{\vz} \rVert} \right]\cdot\sum_{k\neq m}\lambda_k\left(\lambda_k^{} - \lambda_m  \right)
\end{equation*}
\end{proof}

\newpage
\section{Derivation of idealized learning dynamics for different loss variations}

\label{appendix:derivations_dyn}

\subsection{Removing the stop-grad from the Euclidean loss \texorpdfstring{$\Leuc$}{}}
Omitting the stop-grad operator from $\Leuc$ gives:
\begin{align*}
    \Leuc_\mathrm{noSG} &= \frac{1}{2} \lVert \Wpred \vz^{(1)} - \vz^{(2)} \rVert^2 \\
    &= \frac{1}{2} \sum_m^M | \lambda_m \hat{z}^{(1)}_m - \hat{z}^{(2)}_m |^2
    \quad .
\end{align*}
Tracing the steps to prove Theorem~\ref{thm:dynamics_linear_iid_l2} and assuming Gaussian i.i.d inputs for a linear network, we write:
\begin{align*}
\frac{\partial \Leuc_\mathrm{noSG}}{\partial \hat{z}_m} &= \frac{\partial \Leuc_\mathrm{noSG}}{\partial \hat{z}_m^{(1)}} + \frac{\partial \Leuc_\mathrm{noSG}}{\partial \hat{z}_m^{(2)}} \\
&= \left( \lambda_m \hat{z}^{(1)}_m - \hat{z}^{(2)}_m \right) \lambda_m - \left( \lambda_m \hat{z}^{(1)}_m - \hat{z}^{(2)}_m \right) \\
&= \left( \lambda_m \hat{z}^{(1)}_m - \hat{z}^{(2)}_m \right) (\lambda_m - 1)\\
\Rightarrow
\frac{\der{\hat{z}}_m}{\der t} &= -\eta \mathbb{E} \left[\frac{\partial \Leuc_\mathrm{noSG}}{\partial \hat{z}_m} \right] \\
&= -\eta \left( \lambda_m \mathbb{E}[\hat{z}^{(1)}_m] - \mathbb{E}[\hat{z}^{(2)}_m] \right) (\lambda_m - 1)\\
&= -\eta \left( 1 - \lambda_m \right)^2\hat{z}_m \quad ,
\end{align*}
which results in decaying representations and thus collapse.

\subsection{Removing the stop-grad from the cosine loss \texorpdfstring{$\loss$}{}}
Following the same arguments as above, omitting the stop-grad operator from $\loss$ gives:
\begin{align*}
 \loss_\mathrm{noSG} &= -\frac{\left(W_\mathrm{P} \boldsymbol{z}^{(1)}\right)^\top \boldsymbol{z}^{(2)}}{\Vert W_\mathrm{P} \boldsymbol{z}^{(1)}\rVert\Vert \boldsymbol{z}^{(2)}\rVert} \\
\Rightarrow 
\frac{\partial \loss_\mathrm{noSG}}{\partial \hat{z}_m} &=\frac{-\lambda_m}{\lVert D \vzhatone\rVert^3\lVert \vzhattwo \rVert}
	\sum_{k\neq m}\left(\lambda_k^{2} (\hat{z}^{(1)}_k)^2 \hat{z}^{(2)}_m - \lambda_m \lambda_k \hat{z}^{(1)}_m \hat{z}^{(1)}_k \hat{z}^{(2)}_k + \lambda_k^{2}\lambda_m (\hat{z}^{(1)}_k)^3 - \lambda_k \hat{z}^{(2)}_m \hat{z}^{(2)}_k \hat{z}^{(1)}_k \right) \\
 & \quad + \frac{-\lambda_m}{\lVert D \vzhatone\rVert^3\lVert \vzhattwo \rVert} \left( \lambda_m^3 (\hat{z}^{(1)}_m)^3 -\lambda_m  (\hat{z}^{(2)}_m)^2 \hat{z}^{(1)}_m\right) \quad ,
\end{align*}
so that, when taking the expectation value over augmentations, the dynamics follow:
\begin{align*}
\frac{\der{\hat{z}}_m}{\der t} &= -\eta \mathbb{E} \left[\frac{\partial \loss_\mathrm{noSG}} {\partial \hat{z}_m} \right] \\
&= \eta\lambda_m\gamma_m\sum_{k\neq m}\lambda_k\left(\lambda_k \cdot \mathbb{E}\left[\chi_k^2\frac{\lVert \vzhat \rVert^2}{\lVert D \vzhat \rVert^3}\right]\cdot \mathbb{E}\left[\chi_m\right]
-\lambda_m \cdot \mathbb{E}\left[\chi_m\chi_k\frac{\lVert \vzhat \rVert^2}{\lVert D \vzhat \rVert^3}\right]\cdot \mathbb{E}\left[\chi_k\right] 
\right)\\
& \quad + \eta\lambda_m\gamma_m\sum_{k\neq m}\lambda_k\left( 
\lambda_m\lambda_k \mathbb{E}\left[\chi_k^3\frac{\lVert \vzhat \rVert^3}{\lVert D \vzhat \rVert^3}\right]\cdot \mathbb{E}\left[\frac{1}{\lVert \vzhat \rVert}\right] - 
\mathbb{E}\left[\chi_k\frac{\lVert \vzhat \rVert}{\lVert D \vzhat \rVert^3}\right]\cdot \mathbb{E}\left[\chi_m\chi_k\lVert \vzhat \rVert\right] \right) \\
& \quad + \eta\lambda_m^2\gamma_m \left(
\lambda_m^2 \mathbb{E}\left[\chi_m^3\frac{\lVert \vzhat \rVert^3}{\lVert D \vzhat \rVert^3}\right]\cdot \mathbb{E}\left[\frac{1}{\lVert \vzhat \rVert}\right] - 
\mathbb{E}\left[\chi_m\frac{\lVert \vzhat \rVert}{\lVert D \vzhat \rVert^3}\right]\cdot \mathbb{E}\left[\chi_m^2\lVert \vzhat \rVert\right]
\right) \quad .
\end{align*}

In the asymptotic regime with dominant eigenvalue $\lambda_{1}$, we get the dynamics:
\begin{align*}
\frac{\der{\hat{z}}_{1}}{\der t} &= \eta\lambda_{1}\gamma_{1}\sum_{k\neq m}\lambda_k\left(
    \lambda_k \epsilon^2 \mathbb{E}\left[\frac{\lVert \vzhat \rVert^2}{\lVert D \vzhat \rVert^3}\right] - \lambda_m \epsilon^2 \mathbb{E}\left[\frac{\lVert \vzhat \rVert^2}{\lVert D \vzhat \rVert^3}\right]
    \right) \\
    & \quad + \eta\lambda_{1}\gamma_{1}\sum_{k\neq m}\lambda_k\left(\lambda_{1}\lambda_m \epsilon^3\mathbb{E}\left[\frac{\lVert \vzhat \rVert^3}{\lVert D \vzhat \rVert^3}\right]\cdot \mathbb{E}\left[\frac{1}{\lVert \vzhat \rVert}\right] - 
    \epsilon^2\mathbb{E}\left[\frac{\lVert \vzhat \rVert}{\lVert D \vzhat \rVert^3}\right]\cdot \mathbb{E}\left[\lVert \vzhat \rVert\right]
    \right)\\
    & \quad + \eta\lambda_{1}^2\gamma_{1}\left(\lambda_{1}^2 \cdot \mathbb{E}\left[\frac{\lVert \vzhat \rVert^3}{\lVert D \vzhat \rVert^3}\right]\cdot \mathbb{E}\left[\frac{1}{\lVert \vzhat \rVert}\right] - 
    \mathbb{E}\left[\frac{\lVert \vzhat \rVert}{\lVert D \vzhat \rVert^3}\right]\cdot \mathbb{E}\left[\lVert \vzhat \rVert\right]
    \right) \\
    &\approx \eta \lambda_{1}^4 \gamma_{1} \cdot \mathbb{E}\left[\frac{\lVert \vzhat \rVert^3}{\lVert D \vzhat \rVert^3}\right]\cdot \mathbb{E}\left[\frac{1}{\lVert \vzhat \rVert}\right] \\
\frac{\der{\hat{z}}_{m\neq 1}}{\der t} &= \eta \lambda_m \gamma_m \sum_{k\not\in \{m,1\}} \lambda_k \left( \lambda_k \epsilon^3 \mathbb{E}\left[\frac{\lVert \vzhat \rVert^2}{\lVert D \vzhat \rVert^3}\right] - \lambda_m \epsilon^3 \mathbb{E}\left[\frac{\lVert \vzhat \rVert^2}{\lVert D \vzhat \rVert^3}\right] \right) \\
& \quad + \eta \lambda_m \gamma_m \lambda_{1} \left( \lambda_{1} \epsilon \mathbb{E}\left[\frac{\lVert \vzhat \rVert^2}{\lVert D \vzhat \rVert^3}\right] - \lambda_m \epsilon \mathbb{E}\left[\frac{\lVert \vzhat \rVert^2}{\lVert D \vzhat \rVert^3}\right] \right) \\
& \quad + \eta \lambda_m \gamma_m \sum_{k\not\in \{m,1\}} \lambda_k \left( \lambda_m \lambda_k \epsilon^3 \mathbb{E}\left[\frac{\lVert \vzhat \rVert^3}{\lVert D \vzhat \rVert^3}\right] \cdot \mathbb{E}\left[\frac{1}{\lVert \vzhat \rVert}\right] - \epsilon^3 \mathbb{E}\left[\frac{\lVert \vzhat \rVert}{\lVert D \vzhat \rVert^3}\right] \cdot \mathbb{E}\left[\lVert \vzhat \rVert\right] \right) \\
& \quad + \eta \lambda_m \gamma_m \lambda_{1} \left( \lambda_m \lambda_{1}\mathbb{E}\left[\frac{\lVert \vzhat \rVert^3}{\lVert D \vzhat \rVert^3}\right] \cdot \mathbb{E}\left[\frac{1}{\lVert \vzhat \rVert}\right] - \epsilon \mathbb{E}\left[\frac{\lVert \vzhat \rVert}{\lVert D \vzhat \rVert^3}\right] \cdot \mathbb{E}\left[\lVert \vzhat \rVert\right] \right) \\
& \quad + \eta\lambda_m^2\gamma_m \left(
\lambda_m^2 \epsilon^3 \mathbb{E}\left[\frac{\lVert \vzhat \rVert^3}{\lVert D \vzhat \rVert^3}\right]\cdot \mathbb{E}\left[\frac{1}{\lVert \vzhat \rVert}\right] - 
 \epsilon^3 \mathbb{E}\left[\frac{\lVert \vzhat \rVert}{\lVert D \vzhat \rVert^3}\right]\cdot \mathbb{E}\left[\lVert \vzhat \rVert\right] \right) \\
 &\approx \eta \lambda_m^2 \gamma_m \lambda_{1}^2 \cdot \mathbb{E}\left[\frac{\lVert \vzhat \rVert^3}{\lVert D \vzhat \rVert^3}\right]\cdot \mathbb{E}\left[\frac{1}{\lVert \vzhat \rVert}\right],
\end{align*}
Thus, all eigenmodes diverge because $\gamma_m \frac{\der{\hat{z}}_m}{\der t} > 0$.

Similarly, we find divergent dynamics when starting in the near-uniform regime:
\begin{align*}
    \frac{\der{\hat{z}}_m}{\der t} &= \eta\lambda_m\gamma_m \mathbb{E}\left[\chi^2\frac{\lVert \vzhat \rVert^2}{\lVert D \vzhat \rVert^3}\right] \cdot \mathbb{E}\left[\chi\right] \sum_{k\neq m}\lambda_k\left(\lambda_k
-\lambda_m \right)\\
& \quad + \eta\lambda_m^2\gamma_m \mathbb{E}\left[\chi^3\frac{\lVert \vzhat \rVert^3}{\lVert D \vzhat \rVert^3}\right]\cdot \mathbb{E}\left[\frac{1}{\lVert \vzhat \rVert}\right] \sum_{k}\lambda_k^2 \\
& \quad - \eta\lambda_m\gamma_m \mathbb{E}\left[\chi\frac{\lVert \vzhat \rVert}{\lVert D \vzhat \rVert^3}\right]\cdot \mathbb{E}\left[\chi^2\lVert \vzhat \rVert\right] \sum_k \lambda_k\\
&\approx \eta\lambda_m^2\gamma_m \mathbb{E}\left[\chi^3\frac{\lVert \vzhat \rVert^3}{\lVert D \vzhat \rVert^3}\right]\cdot \mathbb{E}\left[\frac{1}{\lVert \vzhat \rVert}\right] \sum_{k}\lambda_k^2 \quad ,
\end{align*}
selecting the terms with the highest power in the eigenvalues.

Thus, omission of stop-grad precludes successful representation learning for both the Euclidean and the cosine loss, but due to different mechanisms.
Euclidean loss yields collapse, whereas the cosine loss succumbs to run-away activity.

\subsection{Removing the predictor from the Euclidean loss \texorpdfstring{$\Leuc$}{}}
To analyze the representational dynamics in the absence of the predictor network, we consider $\Leuc_\mathrm{noPred}$:
\begin{align*}
    \Leuc_\mathrm{noPred} &= \frac{1}{2} \lVert \vz^{(1)} - \mathrm{SG}(\vz^{(2)}) \rVert^2 \\
    &= \frac{1}{2} \sum_m^M | \hat{z}^{(1)}_m - \mathrm{SG}(\hat{z}^{(2)}_m) |^2
    \quad .
\end{align*}
The dynamics resulting from this loss function are a special case of the dynamics derived in Theorem~\ref{thm:dynamics_linear_iid_l2} with all the eigenvalues equal to one ($\lambda_k=1$). 
In particular Eq.~\eqref{eq:dyn_directloss_l2_exact} becomes:
\begin{equation*}
    \frac{\der{\hat{z}}^{(1)}_{m}}{\der t} = -\eta \frac{\partial \Leuc_\mathrm{noPred}}{\partial \hat{z}^{(1)}_{m}}(t) = \eta \left( \hat{z}^{(2)}_{m} - \hat{z}^{(1)}_{m}\right),
\end{equation*}
which evaluates to 0 under expectation over augmentations.
Hence there is no learning without the predictor.

\subsection{Removing the predictor from the cosine loss \texorpdfstring{$\loss$}{}}
Similarly, when we remove the predictor from $\loss$ it yields:
\begin{align*}
    \loss_\mathrm{noPred} &= -\sum_m^{M} \frac{\hat{z}^{(1)}_m \mathrm{SG}(\hat{z}^{(2)}_m)}{\Vert \hat{\vz}^{(1)}\rVert\Vert \mathrm{SG}(\hat{\vz}^{(2)})\rVert} \quad,
\end{align*}
so that Eq.~\eqref{eq:dyn_directloss_cos_exact} becomes:
\begin{equation}
\label{eq:dyn_nopred_cos_exact}
    \frac{\der{\hat{z}}^{(1)}_{m}}{\der t} = \frac{\eta}{\lVert \hat{\vz}^{(1)}\rVert^3\lVert \hat{\vz}^{(2)}\rVert}
	\sum_{k\neq m}\left((\hat{z}^{(1)}_k)^2 \hat{z}^{(2)}_m - \hat{z}^{(1)}_m \hat{z}^{(1)}_k \hat{z}^{(2)}_k \right) \quad.
\end{equation}
Here, the near-uniform approximation (Eq.~\eqref{eq:dyn_directloss_cos}) of ignoring the differences in $\chi$ between different eigenmodes is not valid.
This is because the $\lambda$-terms are no longer present, and the effects of the $\chi$-terms on the dynamics cannot be treated as negligible.
In particular, setting $\Wpred = I$ in the dynamics derived in Theorem \ref{thm:dynamics_linear_iid_cos} would yield zero dynamics.
However taking the expectation of Eq.~\eqref{eq:dyn_nopred_cos_exact} over augmentations yields the non-zero dynamics:
\begin{align}
    \frac{\der{\hat{z}}_m}{\der t} = \eta \sum_{k\neq m}\left( \mathbb{E}\left[ \frac{\hat{z}_k^2}{\lVert \hat{\boldsymbol{z}} \rVert^3} \right]
\mathbb{E}\left[ \frac{\hat{z}_m}{\lVert \hat{\boldsymbol{z}} \rVert} \right]
- \mathbb{E}\left[ \frac{\hat{z}_m\hat{z}_k}{\lVert \hat{\boldsymbol{z}} \rVert^3} \right]
\mathbb{E}\left[ \frac{\hat{z}_k}{\lVert \hat{\boldsymbol{z}} \rVert} \right]
\right),
\label{app_eq:dyn_cos_nopred}
\end{align}
which consists of terms of order $\mathcal{O} \left(\frac{1}{\hat{z}_m} \right)$ and $\mathcal{O}(1)$ in $\hat{z}_m$, hinting at slower dynamics compared to Eq.~\eqref{eq:dyn_directloss_cos_exact}.
To study these granular effects, we would need to explicitly model the effect of the augmentations, for which we do not have a good statistical model.
In lieu of deriving these dynamics analytically, we make an observation which restricts the possible dynamical behavior.
Specifically, the sum of the eigenvalues remains constant throughout training.
To show this, we begin by writing out the expression for the derivative over time of the sum of all the eigenvalues:
\begin{align*}
    \frac{\der}{\der t} \sum_m\lambda_m = \sum_m\frac{\der \lambda_m}{\der t} &= \sum_m \frac{\der}{\der t} \mathbb{E}_{\rm data}\left[ \hat{z}_m^2 \right] \\
    &= \mathbb{E}_{\rm data}\left[ \sum_m \frac{\der \hat{z}_m^2}{\der t} \right] \\
    &= \mathbb{E}_{\rm data}\left[\sum_m \hat{z}_m\frac{\der \hat{z}_m }{\der t}\right].
\end{align*}
We can derive the term inside the expectation by adding up the dynamics given by Eq.~\eqref{app_eq:dyn_cos_nopred} for all the different eigenmodes:
\begin{align*}
    \hat{z}^{(1)}_m \frac{\der{\hat{z}}^{(1)}_{m}}{\der t} &= \frac{\eta}{\lVert \hat{\vz}^{(1)}\rVert^3\lVert \hat{\vz}^{(2)}\rVert}
	\sum_{k\neq m}\left((\hat{z}^{(1)}_k)^2 \hat{z}^{(1)}_m \hat{z}^{(2)}_m - (\hat{z}^{(1)}_m)^2 \hat{z}^{(1)}_k \hat{z}^{(2)}_k \right) \\
 \implies \sum_m \hat{z}^{(1)}_m \frac{\der{\hat{z}}^{(1)}_{m}}{\der t} &= \frac{\eta}{\lVert \hat{\vz}^{(1)}\rVert^3\lVert \hat{\vz}^{(2)}\rVert}
	\sum_m\sum_{k\neq m}\left((\hat{z}^{(1)}_k)^2 \hat{z}^{(1)}_m \hat{z}^{(2)}_m - (\hat{z}^{(1)}_m)^2 \hat{z}^{(1)}_k \hat{z}^{(2)}_k \right) \\
 &= 0 \\ 
\implies \mathbb{E}_{\rm data}&\left[\mathbb{E}_{\rm aug} \left[\sum_m \hat{z}_m^{(1)}\frac{\der \hat{z}_m^{(1)}}{\der t} \right] \right] = \mathbb{E}_{\rm data}\left[\sum_m \hat{z}_m\frac{\der\hat{z}_m}{\der t}  \right] = 0 \quad,
\end{align*}
proving that $\frac{\der}{\der t} \sum_m\lambda_m=0$, i.e, the sum of the eigenvalues is conserved.
This precludes collapsing dynamics where all eigenvalues go to zero as well as diverging dynamics where at least one eigenvalue diverges.

\subsection{Isotropic losses for equalized convergence rates}

In Expressions~\eqref{eq:dyn_directloss_l2} and \eqref{eq:dyn_directloss_cos} we see that the overall learning dynamics have a quadratic dependence on the eigenvalues with a root near collapsed solutions, which causes these modes to learn slower.
We reasoned that this anisotropy could be detrimental for learning.
To address this issue, we sought to derive alternative loss functions that encourage isotropic learning dynamics for all modes.

\subsubsection{Euclidean IsoLoss.} 
We start by deriving an IsoLoss function for the Euclidean case $\Leuc$.
To avoid the unwanted quadratic dependence, we first note that we would like to arrive at the following expression for the dynamics:
\begin{equation*}
    \frac{\der{\hat{z}}_m}{\der t} = \eta \left(1- \lambda_m \right) \hat{z}_m \quad.
\end{equation*}
By recalling the Euclidean loss and corresponding dynamics:
\begin{equation*}
    \Leuc = \tfrac{1}{2} \sum_m^{M} | \lambda_m \hat{z}^{(1)}_m - \mathrm{SG}(\hat{z}^{(2)}_m) |^2 \Rightarrow 
    \frac{\der{\hat{z}}_m}{\der t} = \eta \lambda_m \left(1- \lambda_m \right) \hat{z}_m \quad,
\end{equation*}
we note that the leading $\lambda_m$ term has no influence on the overall sign of the dynamics, and is introduced by the second step in the chain rule:
\begin{equation*}
    \frac{\partial \Leuc}{\partial \hat{z}_m^{(1)}} = (\lambda_m \hat{z}^{(1)} - \hat{z}^{(2)}) \cdot \frac{\partial}{\partial \hat{z}_m^{(1)}} (\lambda_m \hat{z}^{(1)} - \hat{z}^{(2)}) \quad.
\end{equation*}
Based on this  realization we see that this second step needs to be modified.
To that end, we start with the desired derivative:
\begin{equation*}
    \frac{\partial \Leuc_\mathrm{iso}}{\partial \hat{z}_m^{(1)}} = (\lambda_m \hat{z}^{(1)} - \hat{z}^{(2)}) \cdot \frac{\partial}{\partial \hat{z}_m^{(1)}} (\hat{z}^{(1)} - \hat{z}^{(2)}) \quad,
\end{equation*}
and see that several loss functions are possible. 
The one we have reported in Eq.~\eqref{eq:iso} we derived by applying an appropriate stop-grad while integrating:
\begin{equation*}
    \frac{\partial \Leuc_\mathrm{iso}}{\partial \hat{z}_m^{(1)}} = (\hat{z}_m^{(1)} + \lambda_m \hat{z}^{(1)} - \hat{z}^{(2)} - \hat{z}_m^{(1)}) \cdot \frac{\partial}{\partial \hat{z}_m^{(1)}} (\hat{z}^{(1)} - \hat{z}^{(2)}) \quad.
\end{equation*}
to give:
\begin{equation*}
    \Leuc_\mathrm{iso} = \tfrac{1}{2} \sum_m^{M} | \hat{z}^{(1)}_m - \mathrm{SG}(\hat{z}^{(2)}_m + \hat{z}^{(1)}_m - \lambda_m \hat{z}^{(1)}_m) |^2
\end{equation*}
Another alternative loss with the same desired isotropic learning dynamics, but using a different placement of the stop-gradient operators, is given by:
\begin{equation*}
    \Leuc_\mathrm{iso} = \sum_m^{M} \mathrm{SG} \left( \lambda_m \hat{z}^{(1)}_m - \hat{z}^{(2)}_m \right) \cdot \left(\hat{z}^{(1)}_m - \mathrm{SG}(\hat{z}^{(2)}_m) \right)
\end{equation*}

\subsubsection{Cosine Similarity IsoLoss.} 
Since most practical SSL approaches rely on cosine similarity, which suffers from  a similar anisotropy of the learning dynamics, we sought to find IsoLosses in this setting. 
With the same goal as above, we would like to arrive at the dynamics:
\begin{equation*}
   \frac{\der{\hat{z}}_m}{\der t} = \eta \frac{\hat{z}^{(2)}_m}{\lVert D \vzhatone\rVert\lVert \vzhattwo\rVert} - \eta\frac{\sum_k\lambda_k \hat{z}^{(1)}_k\hat{z}^{(2)}_k}{\lVert D \vzhatone\rVert^3\lVert \vzhattwo\rVert}\cdot\lambda_m \hat{z}^{(1)}_m
\end{equation*}
starting from the cosine loss and corresponding dynamics:
\begin{align}
    \loss &= -\sum_m^{M} \frac{\lambda_m\hat{z}^{(1)}_m \:\mathrm{SG}(\hat{z}^{(2)}_m)}{\Vert D \hat{z}^{(1)}\rVert\Vert \mathrm{SG}(\hat{z}^{(2)})\rVert} \label{eq:isocos_deriv_orig_loss} \\
    \Rightarrow 
    \frac{\der{\hat{z}}_m}{\der t} &= \eta \frac{\lambda_m\hat{z}^{(2)}_m}{\lVert D \vzhatone\rVert\lVert \vzhattwo\rVert} - \eta\frac{\sum_k\lambda_k \hat{z}^{(1)}_k\hat{z}^{(2)}_k}{\lVert D \vzhatone\rVert^3\lVert \vzhattwo\rVert}\cdot\lambda_m^{2} \hat{z}^{(1)}_m \quad. \label{eq:isocos_deriv}
\end{align}
The IsoLoss in this case can be derived by noting how $\lambda_m$ arises in each of the two terms in Eq.~\eqref{eq:isocos_deriv}, and engineering an alternative loss function corresponding to each term separately.

In the first term, $\lambda_m$ arises from the partial derivative of the numerator $\lambda_m\hat{z}^{(1)}_m \:\mathrm{SG}(\hat{z}^{(2)}_m)$ in the original loss (Eq.~\eqref{eq:isocos_deriv_orig_loss}).
This can be remediated by using $\hat{z}^{(1)}_m \:\mathrm{SG}(\hat{z}^{(2)}_m)$ as the numerator instead.

In the second term in Eq.~\eqref{eq:isocos_deriv}, $\lambda_m^2$ arises from the partial derivative of $\lVert D \vzhatone\rVert = \sqrt{\sum_k (\lambda_k \hat{z}_m^{(1)})^2}$ in the denominator.
We can reduce $\lambda_m^2$ to $\lambda_m$ by instead taking the partial derivative of $\lVert D^{1/2} \vzhatone\rVert = \sqrt{\sum_k (\lambda_k^{1/2} \hat{z}_m^{(1)})^2}$.

Putting these insights together, we arrive at the desired partial derivative:
\begin{align*}
    \frac{\partial \loss_\mathrm{iso}}{\partial \hat{z}_m^{(1)}} &= \frac{-1}{\lVert D \vzhatone\rVert\lVert \vzhattwo\rVert} \cdot \frac{\partial \hat{z}^{(1)}_m \hat{z}^{(2)}_m}{\partial \hat{z}^{(1)}_m} + \frac{\sum_k\lambda_k \hat{z}^{(1)}_k\hat{z}^{(2)}_k}{\lVert D \vzhatone\rVert^3\lVert \vzhattwo\rVert}\cdot \frac{1}{2} \frac{\partial \lambda_m (\hat{z}^{(1)}_m)^2}{\partial \hat{z}^{(1)}_m} \\
    &= \frac{-1}{\lVert D \vzhatone\rVert\lVert \vzhattwo\rVert} \cdot \frac{\partial (\vzhatone)^\top \vzhattwo}{\partial \hat{z}^{(1)}_m} + \frac{\sum_k\lambda_k \hat{z}^{(1)}_k\hat{z}^{(2)}_k}{\lVert D \vzhatone\rVert^3\lVert \vzhattwo\rVert}\cdot \frac{1}{2} \frac{\partial \lVert D^{1/2} \vzhatone \rVert^2}{\partial \hat{z}^{(1)}_m}
    \quad,
\end{align*}
and the integrated IsoLoss in eigenspace:
\begin{equation*}
    \loss_{\mathrm{iso}} = -(\vzhatone)^\top \mathrm{SG}\left(\frac{\vzhattwo}{\lVert D \vzhatone \rVert \lVert \vzhattwo \rVert}\right) + 
    \frac{1}{2} 
    \mathrm{SG}\left(\frac{(D\vzhatone)^\top \vzhattwo}{\lVert D \vzhatone \rVert^3 \lVert \vzhattwo \rVert}\right) \lVert D^{1/2} \vzhatone \rVert^2 \quad .
\end{equation*} 
Rotating all terms back to the original space gives the desired IsoLoss for cosine similarity as reported (Eq.~\eqref{eq:isocos}):
\begin{equation*}
    \mathcal{L}_{\mathrm{iso}} = -(\boldsymbol{z}^{(1)})^\top \mathrm{SG}\left(\frac{\boldsymbol{z}^{(2)}}{\lVert \Wpred \boldsymbol{z}^{(1)} \rVert \lVert \boldsymbol{z}^{(2)} \rVert}\right) + 
    \frac{1}{2} 
    \mathrm{SG}\left(\frac{(\Wpred\boldsymbol{z}^{(1)})^\top \boldsymbol{z}^{(2)}}{\lVert \Wpred \boldsymbol{z}^{(1)} \rVert^3 \lVert \boldsymbol{z}^{(2)} \rVert}\right) \lVert \Wpred^{1/2} \boldsymbol{z}^{(1)} \rVert^2 \quad .
\end{equation*}

\newpage

\section{Experimental methods}
\label{appendix:experimental}

\paragraph{Self-supervised pretraining.}
We used the CIFAR-10, CIFAR-100 \citep{krizhevsky2009learning}, STL-10 \citep{coates2011analysis}, and TinyImageNet \citep{le2015tiny} datasets for self-supervised learning with a ResNet-18 \citep{he2016deep} encoder and the SimCLR set of transformations \citep{chen2020simple}.
We also adopted several modifications of ResNet-18 which have been proposed to deal with the low resolution of the images in these datasets \citep{chen2020simple}.
The ResNet modifications comprise using $3\times 3$ convolutional kernels instead of $7\times 7$ kernels and skipping the first max-pooling operation.
The Solo-learn library \citep{costa2022sololearn} also provides specialized sets of augmentations that work well for these datasets, which we adopted as well.
The configurations we used for each dataset are summarized in Table~\ref{tab:arch_mods}.
We used BatchNorm in the backbone and the projector \ac{MLP} in the hidden layer for all methods.
For \ac{BYOL}, we included BatchNorm also in the hidden layer of the predictor \ac{MLP}.

We used a projection dimension of 256 for the projection \ac{MLP} using one hidden layer with 4096 units, and the same architecture for the nonlinear predictor for the \ac{BYOL} baseline.
For networks using \ac{EMA} target networks, we used the LARS optimizer with learning rate 1.0 whereas for networks without the \ac{EMA}, we used stochastic gradient descent with momentum 0.9 and learning rate 0.1.
Furthermore, we used a warmup period of 10 epochs for the learning rate followed by a cosine decay schedule and a batch size of 256.
We also used a weight decay $4\times 10^{-4}$ for the closed-form predictor models and $10^{-5}$ for the nonlinear predictor models.
For evaluation, we removed the projection \ac{MLP} and used the embeddings at the pooled output of the ResNet convolutional layers following standard practice.
For the \ac{EMA}, we started with $\tau_\mathrm{base}=0.99$ and increased $\tau_\mathrm{EMA}$ to 1 with a cosine schedule exactly following the configuration reported in \citep{grill2020bootstrap}.
For DirectPred, we used $\alpha=0.5$ and $\tau=0.998$ for the moving average estimate of the correlation matrix updated at every step.

\begin{table*}[bth]
\caption{}
\label{tab:arch_mods}
\vskip 0.15in
\begin{center}
\begin{small}
\begin{tabular}{lcccc}
\toprule
 & CIFAR-10 & CIFAR-100 & STL-10 & TinyImageNet \\
\midrule
Resolution           &           $32\times 32$        & $32\times 32$             &   $96\times 96$            & $64\times 64$          \\
\midrule
Kernel size  & $3\times 3$ & $3\times 3$ & $7\times 7$ & $3\times 3$ \\
First max-pool & No & No & Yes & Yes \\
 \midrule
Blur & No & No & Yes & No \\
\bottomrule
\end{tabular}
\end{small}
\end{center}
\vskip -0.1in
\end{table*}

\paragraph{Linear evaluation protocol.}
We reported the held-out classification accuracy on the test sets for CIFAR-10/100 and STL-10, and the validation set for TinyImageNet, after online training of the gradient-isolated linear classifier on each labeled example in the training set during pretraining.

\paragraph{Compute resources} All simulations were run on an in-house cluster consisting of 5 nodes with 4 V100 NVIDIA GPUs each, one node with 4 A100 NVIDIA GPUs, and one node with 8 A40 NVIDIA GPUs.
Runs on CIFAR-10/100 took about 8 hours each, and the STL-10 and TinyImagenet runs took about 24 hours each.

\newpage

\end{bibunit}


\begin{thebibliography}{38}
\providecommand{\natexlab}[1]{#1}
\providecommand{\url}[1]{\texttt{#1}}
\expandafter\ifx\csname urlstyle\endcsname\relax
  \providecommand{\doi}[1]{doi: #1}\else
  \providecommand{\doi}{doi: \begingroup \urlstyle{rm}\Url}\fi

\bibitem[Grill et~al.(2020)Grill, Strub, Altch{\'e}, Tallec, Richemond,
  Buchatskaya, Doersch, Avila~Pires, Guo, Gheshlaghi~Azar,
  et~al.]{grill2020bootstrap}
Jean-Bastien Grill, Florian Strub, Florent Altch{\'e}, Corentin Tallec, Pierre
  Richemond, Elena Buchatskaya, Carl Doersch, Bernardo Avila~Pires, Zhaohan
  Guo, Mohammad Gheshlaghi~Azar, et~al.
\newblock Bootstrap your own latent-a new approach to self-supervised learning.
\newblock \emph{Advances in neural information processing systems},
  33:\penalty0 21271--21284, 2020.

\bibitem[Chen et~al.(2020)Chen, Kornblith, Norouzi, and Hinton]{chen2020simple}
Ting Chen, Simon Kornblith, Mohammad Norouzi, and Geoffrey Hinton.
\newblock A simple framework for contrastive learning of visual
  representations.
\newblock In \emph{International conference on machine learning}, pages
  1597--1607. PMLR, 2020.

\bibitem[Caron et~al.(2020)Caron, Misra, Mairal, Goyal, Bojanowski, and
  Joulin]{caron2020unsupervised}
Mathilde Caron, Ishan Misra, Julien Mairal, Priya Goyal, Piotr Bojanowski, and
  Armand Joulin.
\newblock Unsupervised learning of visual features by contrasting cluster
  assignments.
\newblock \emph{Advances in Neural Information Processing Systems},
  33:\penalty0 9912--9924, 2020.

\bibitem[Caron et~al.(2021)Caron, Touvron, Misra, J{\'e}gou, Mairal,
  Bojanowski, and Joulin]{caron2021emerging}
Mathilde Caron, Hugo Touvron, Ishan Misra, Herv{\'e} J{\'e}gou, Julien Mairal,
  Piotr Bojanowski, and Armand Joulin.
\newblock Emerging properties in self-supervised vision transformers.
\newblock In \emph{Proceedings of the IEEE/CVF international conference on
  computer vision}, pages 9650--9660, 2021.

\bibitem[Chen and He(2021)]{chen2021exploring}
Xinlei Chen and Kaiming He.
\newblock Exploring simple siamese representation learning.
\newblock In \emph{Proceedings of the IEEE/CVF Conference on Computer Vision
  and Pattern Recognition}, pages 15750--15758, 2021.

\bibitem[Zbontar et~al.(2021)Zbontar, Jing, Misra, LeCun, and
  Deny]{zbontar2021barlow}
Jure Zbontar, Li~Jing, Ishan Misra, Yann LeCun, and St{é}phane Deny.
\newblock Barlow twins: Self-supervised learning via redundancy reduction.
\newblock In \emph{International Conference on Machine Learning}, pages
  12310--12320. PMLR, 2021.

\bibitem[Bardes et~al.(2021)Bardes, Ponce, and LeCun]{bardes2021vicreg}
Adrien Bardes, Jean Ponce, and Yann LeCun.
\newblock Vicreg: Variance-invariance-covariance regularization for
  self-supervised learning.
\newblock \emph{arXiv preprint arXiv:2105.04906}, 2021.

\bibitem[Assran et~al.(2022)Assran, Caron, Misra, Bojanowski, Bordes, Vincent,
  Joulin, Rabbat, and Ballas]{assran2022masked}
Mahmoud Assran, Mathilde Caron, Ishan Misra, Piotr Bojanowski, Florian Bordes,
  Pascal Vincent, Armand Joulin, Mike Rabbat, and Nicolas Ballas.
\newblock Masked siamese networks for label-efficient learning.
\newblock In \emph{Computer Vision--ECCV 2022: 17th European Conference, Tel
  Aviv, Israel, October 23--27, 2022, Proceedings, Part XXXI}, pages 456--473.
  Springer, 2022.

\bibitem[Tian et~al.(2020)Tian, Sun, Poole, Krishnan, Schmid, and
  Isola]{tian2020makes_good_views}
Yonglong Tian, Chen Sun, Ben Poole, Dilip Krishnan, Cordelia Schmid, and
  Phillip Isola.
\newblock What makes for good views for contrastive learning?
\newblock \emph{Advances in neural information processing systems},
  33:\penalty0 6827--6839, 2020.

\bibitem[Bromley et~al.(1993)Bromley, Guyon, LeCun, S{\"a}ckinger, and
  Shah]{bromley1993signature}
Jane Bromley, Isabelle Guyon, Yann LeCun, Eduard S{\"a}ckinger, and Roopak
  Shah.
\newblock Signature verification using a" siamese" time delay neural network.
\newblock \emph{Advances in neural information processing systems}, 6, 1993.

\bibitem[Oord et~al.(2018)Oord, Li, and Vinyals]{oord2018representation}
Aaron van~den Oord, Yazhe Li, and Oriol Vinyals.
\newblock Representation learning with contrastive predictive coding.
\newblock \emph{arXiv preprint arXiv:1807.03748}, 2018.

\bibitem[Wang and Isola(2020)]{wang2020understanding}
Tongzhou Wang and Phillip Isola.
\newblock Understanding contrastive representation learning through alignment
  and uniformity on the hypersphere.
\newblock In \emph{International Conference on Machine Learning}, pages
  9929--9939. PMLR, 2020.

\bibitem[Chen et~al.(2021)Chen, Luo, and Li]{chen2021intriguing}
Ting Chen, Calvin Luo, and Lala Li.
\newblock Intriguing properties of contrastive losses.
\newblock \emph{Advances in Neural Information Processing Systems},
  34:\penalty0 11834--11845, 2021.

\bibitem[Halvagal and Zenke(2022)]{halvagal2022combination}
Manu~Srinath Halvagal and Friedemann Zenke.
\newblock The combination of {Hebbian} and predictive plasticity learns
  invariant object representations in deep sensory networks.
\newblock \emph{bioRxiv}, accepte, 2022.
\newblock \doi{10.1101/2022.03.17.484712}.
\newblock URL
  \url{https://www.biorxiv.org/content/10.1101/2022.03.17.484712v2}.

\bibitem[Ermolov et~al.(2021)Ermolov, Siarohin, Sangineto, and
  Sebe]{ermolov2021whitening}
Aleksandr Ermolov, Aliaksandr Siarohin, Enver Sangineto, and Nicu Sebe.
\newblock Whitening for self-supervised representation learning.
\newblock In \emph{International Conference on Machine Learning}, pages
  3015--3024. PMLR, 2021.

\bibitem[Weng et~al.(2022)Weng, Huang, Zhao, Anwer, Khan, and
  Shahbaz~Khan]{weng2022investigation}
Xi~Weng, Lei Huang, Lei Zhao, Rao Anwer, Salman~H Khan, and Fahad Shahbaz~Khan.
\newblock An investigation into whitening loss for self-supervised learning.
\newblock \emph{Advances in Neural Information Processing Systems},
  35:\penalty0 29748--29760, 2022.

\bibitem[Tian et~al.(2021)Tian, Chen, and Ganguli]{tian2021understanding}
Yuandong Tian, Xinlei Chen, and Surya Ganguli.
\newblock Understanding self-supervised learning dynamics without contrastive
  pairs.
\newblock In \emph{International Conference on Machine Learning}, pages
  10268--10278. PMLR, 2021.

\bibitem[Wang et~al.(2021)Wang, Chen, Du, and Tian]{wang2021towards}
Xiang Wang, Xinlei Chen, Simon~S Du, and Yuandong Tian.
\newblock Towards demystifying representation learning with non-contrastive
  self-supervision.
\newblock \emph{arXiv preprint arXiv:2110.04947}, 2021.

\bibitem[Liu et~al.(2022)Liu, Suganuma, and Okatani]{liu2022bridging}
Kang-Jun Liu, Masanori Suganuma, and Takayuki Okatani.
\newblock Bridging the gap from asymmetry tricks to decorrelation principles in
  non-contrastive self-supervised learning.
\newblock \emph{Advances in Neural Information Processing Systems},
  35:\penalty0 19824--19835, 2022.

\bibitem[Tao et~al.(2022)Tao, Wang, Zhu, Dong, Song, Huang, and
  Dai]{tao2022exploring}
Chenxin Tao, Honghui Wang, Xizhou Zhu, Jiahua Dong, Shiji Song, Gao Huang, and
  Jifeng Dai.
\newblock Exploring the equivalence of siamese self-supervised learning via a
  unified gradient framework.
\newblock In \emph{Proceedings of the IEEE/CVF Conference on Computer Vision
  and Pattern Recognition}, pages 14431--14440, 2022.

\bibitem[Richemond et~al.(2023)Richemond, Tam, Tang, Strub, Piot, and
  Hill]{richemond2023edge}
Pierre~H Richemond, Allison Tam, Yunhao Tang, Florian Strub, Bilal Piot, and
  Felix Hill.
\newblock The edge of orthogonality: A simple view of what makes byol tick.
\newblock \emph{arXiv preprint arXiv:2302.04817}, 2023.

\bibitem[Richemond et~al.(2020)Richemond, Grill, Altch{\'e}, Tallec, Strub,
  Brock, Smith, De, Pascanu, Piot, et~al.]{richemond2020byol}
Pierre~H Richemond, Jean-Bastien Grill, Florent Altch{\'e}, Corentin Tallec,
  Florian Strub, Andrew Brock, Samuel Smith, Soham De, Razvan Pascanu, Bilal
  Piot, et~al.
\newblock Byol works even without batch statistics.
\newblock \emph{arXiv preprint arXiv:2010.10241}, 2020.

\bibitem[Wang et~al.(2022)Wang, Fan, Tian, Kihara, and
  Chen]{wang2022importance}
Xiao Wang, Haoqi Fan, Yuandong Tian, Daisuke Kihara, and Xinlei Chen.
\newblock On the importance of asymmetry for siamese representation learning.
\newblock In \emph{Proceedings of the IEEE/CVF Conference on Computer Vision
  and Pattern Recognition}, pages 16570--16579, 2022.

\bibitem[Zhang et~al.(2022)Zhang, Zhang, Zhang, Pham, Yoo, and
  Kweon]{zhang2022does}
Chaoning Zhang, Kang Zhang, Chenshuang Zhang, Trung~X Pham, Chang~D Yoo, and
  In~So Kweon.
\newblock How does simsiam avoid collapse without negative samples? a unified
  understanding with self-supervised contrastive learning.
\newblock \emph{arXiv preprint arXiv:2203.16262}, 2022.

\bibitem[Jacot et~al.(2018)Jacot, Gabriel, and Hongler]{jacot2018neural}
Arthur Jacot, Franck Gabriel, and Cl{\'e}ment Hongler.
\newblock Neural tangent kernel: Convergence and generalization in neural
  networks.
\newblock \emph{Advances in neural information processing systems}, 31, 2018.

\bibitem[Lee et~al.(2019)Lee, Xiao, Schoenholz, Bahri, Novak, Sohl-Dickstein,
  and Pennington]{lee_wide_2019}
Jaehoon Lee, Lechao Xiao, Samuel Schoenholz, Yasaman Bahri, Roman Novak, Jascha
  Sohl-Dickstein, and Jeffrey Pennington.
\newblock Wide {Neural} {Networks} of {Any} {Depth} {Evolve} as {Linear}
  {Models} {Under} {Gradient} {Descent}.
\newblock \emph{Advances in Neural Information Processing Systems},
  32:\penalty0 8572--8583, 2019.
\newblock URL
  \url{https://proceedings.neurips.cc/paper/2019/hash/0d1a9651497a38d8b1c3871c84528bd4-Abstract.html}.

\bibitem[Saxe et~al.(2013)Saxe, McClelland, and Ganguli]{saxe2013exact}
Andrew~M. Saxe, James~L. McClelland, and Surya Ganguli.
\newblock Exact solutions to the nonlinear dynamics of learning in deep linear
  neural networks.
\newblock \emph{arXiv:1312.6120 [cond-mat, q-bio, stat]}, December 2013.
\newblock URL \url{http://arxiv.org/abs/1312.6120}.

\bibitem[Jing et~al.(2021)Jing, Vincent, LeCun, and
  Tian]{jing2021understanding}
Li~Jing, Pascal Vincent, Yann LeCun, and Yuandong Tian.
\newblock Understanding dimensional collapse in contrastive self-supervised
  learning.
\newblock \emph{arXiv preprint arXiv:2110.09348}, 2021.

\bibitem[Krizhevsky et~al.(2009)Krizhevsky, Hinton,
  et~al.]{krizhevsky2009learning}
Alex Krizhevsky, Geoffrey Hinton, et~al.
\newblock Learning multiple layers of features from tiny images, 2009.

\bibitem[Coates et~al.(2011)Coates, Ng, and Lee]{coates2011analysis}
Adam Coates, Andrew Ng, and Honglak Lee.
\newblock An analysis of single-layer networks in unsupervised feature
  learning.
\newblock In \emph{Proceedings of the fourteenth international conference on
  artificial intelligence and statistics}, pages 215--223. JMLR Workshop and
  Conference Proceedings, 2011.

\bibitem[Le and Yang(2015)]{le2015tiny}
Ya~Le and Xuan Yang.
\newblock Tiny imagenet visual recognition challenge.
\newblock \emph{CS 231N}, 7\penalty0 (7):\penalty0 3, 2015.

\bibitem[da~Costa et~al.(2022)da~Costa, Fini, Nabi, Sebe, and
  Ricci]{costa2022sololearn}
Victor Guilherme~Turrisi da~Costa, Enrico Fini, Moin Nabi, Nicu Sebe, and Elisa
  Ricci.
\newblock solo-learn: A library of self-supervised methods for visual
  representation learning.
\newblock \emph{Journal of Machine Learning Research}, 23\penalty0
  (56):\penalty0 1--6, 2022.
\newblock URL \url{http://jmlr.org/papers/v23/21-1155.html}.

\bibitem[He et~al.(2016)He, Zhang, Ren, and Sun]{he2016deep}
Kaiming He, Xiangyu Zhang, Shaoqing Ren, and Jian Sun.
\newblock Deep residual learning for image recognition.
\newblock In \emph{Proceedings of the IEEE conference on computer vision and
  pattern recognition}, pages 770--778, 2016.

\bibitem[Simon et~al.(2023)Simon, Knutins, Ziyin, Geisz, Fetterman, and
  Albrecht]{simon2023stepwise}
James~B Simon, Maksis Knutins, Liu Ziyin, Daniel Geisz, Abraham~J Fetterman,
  and Joshua Albrecht.
\newblock On the stepwise nature of self-supervised learning.
\newblock \emph{arXiv preprint arXiv:2303.15438}, 2023.

\bibitem[Garrido et~al.(2022)Garrido, Chen, Bardes, Najman, and
  Lecun]{garrido2022duality}
Quentin Garrido, Yubei Chen, Adrien Bardes, Laurent Najman, and Yann Lecun.
\newblock On the duality between contrastive and non-contrastive
  self-supervised learning.
\newblock \emph{arXiv preprint arXiv:2206.02574}, 2022.

\bibitem[Balestriero and LeCun(2022)]{balestriero2022contrastive}
Randall Balestriero and Yann LeCun.
\newblock Contrastive and non-contrastive self-supervised learning recover
  global and local spectral embedding methods.
\newblock \emph{arXiv preprint arXiv:2205.11508}, 2022.

\bibitem[Dubois et~al.(2022)Dubois, Ermon, Hashimoto, and
  Liang]{dubois2022improving}
Yann Dubois, Stefano Ermon, Tatsunori~B Hashimoto, and Percy~S Liang.
\newblock Improving self-supervised learning by characterizing idealized
  representations.
\newblock \emph{Advances in Neural Information Processing Systems},
  35:\penalty0 11279--11296, 2022.

\bibitem[Pokle et~al.(2022)Pokle, Tian, Li, and Risteski]{pokle2022contrasting}
Ashwini Pokle, Jinjin Tian, Yuchen Li, and Andrej Risteski.
\newblock Contrasting the landscape of contrastive and non-contrastive
  learning.
\newblock \emph{arXiv preprint arXiv:2203.15702}, 2022.

\end{thebibliography}

\begin{thebibliography}{9}
\providecommand{\natexlab}[1]{#1}
\providecommand{\url}[1]{\texttt{#1}}
\expandafter\ifx\csname urlstyle\endcsname\relax
  \providecommand{\doi}[1]{doi: #1}\else
  \providecommand{\doi}{doi: \begingroup \urlstyle{rm}\Url}\fi

\bibitem[Tian et~al.(2021)Tian, Chen, and Ganguli]{tian2021understanding}
Yuandong Tian, Xinlei Chen, and Surya Ganguli.
\newblock Understanding self-supervised learning dynamics without contrastive
  pairs.
\newblock In \emph{International Conference on Machine Learning}, pages
  10268--10278. PMLR, 2021.

\bibitem[Lee et~al.(2019)Lee, Xiao, Schoenholz, Bahri, Novak, Sohl-Dickstein,
  and Pennington]{lee_wide_2019}
Jaehoon Lee, Lechao Xiao, Samuel Schoenholz, Yasaman Bahri, Roman Novak, Jascha
  Sohl-Dickstein, and Jeffrey Pennington.
\newblock Wide {Neural} {Networks} of {Any} {Depth} {Evolve} as {Linear}
  {Models} {Under} {Gradient} {Descent}.
\newblock \emph{Advances in Neural Information Processing Systems},
  32:\penalty0 8572--8583, 2019.
\newblock URL
  \url{https://proceedings.neurips.cc/paper/2019/hash/0d1a9651497a38d8b1c3871c84528bd4-Abstract.html}.

\bibitem[Krizhevsky et~al.(2009)Krizhevsky, Hinton,
  et~al.]{krizhevsky2009learning}
Alex Krizhevsky, Geoffrey Hinton, et~al.
\newblock Learning multiple layers of features from tiny images, 2009.

\bibitem[Coates et~al.(2011)Coates, Ng, and Lee]{coates2011analysis}
Adam Coates, Andrew Ng, and Honglak Lee.
\newblock An analysis of single-layer networks in unsupervised feature
  learning.
\newblock In \emph{Proceedings of the fourteenth international conference on
  artificial intelligence and statistics}, pages 215--223. JMLR Workshop and
  Conference Proceedings, 2011.

\bibitem[Le and Yang(2015)]{le2015tiny}
Ya~Le and Xuan Yang.
\newblock Tiny imagenet visual recognition challenge.
\newblock \emph{CS 231N}, 7\penalty0 (7):\penalty0 3, 2015.

\bibitem[He et~al.(2016)He, Zhang, Ren, and Sun]{he2016deep}
Kaiming He, Xiangyu Zhang, Shaoqing Ren, and Jian Sun.
\newblock Deep residual learning for image recognition.
\newblock In \emph{Proceedings of the IEEE conference on computer vision and
  pattern recognition}, pages 770--778, 2016.

\bibitem[Chen et~al.(2020)Chen, Kornblith, Norouzi, and Hinton]{chen2020simple}
Ting Chen, Simon Kornblith, Mohammad Norouzi, and Geoffrey Hinton.
\newblock A simple framework for contrastive learning of visual
  representations.
\newblock In \emph{International conference on machine learning}, pages
  1597--1607. PMLR, 2020.

\bibitem[da~Costa et~al.(2022)da~Costa, Fini, Nabi, Sebe, and
  Ricci]{costa2022sololearn}
Victor Guilherme~Turrisi da~Costa, Enrico Fini, Moin Nabi, Nicu Sebe, and Elisa
  Ricci.
\newblock solo-learn: A library of self-supervised methods for visual
  representation learning.
\newblock \emph{Journal of Machine Learning Research}, 23\penalty0
  (56):\penalty0 1--6, 2022.
\newblock URL \url{http://jmlr.org/papers/v23/21-1155.html}.

\bibitem[Grill et~al.(2020)Grill, Strub, Altch{\'e}, Tallec, Richemond,
  Buchatskaya, Doersch, Avila~Pires, Guo, Gheshlaghi~Azar,
  et~al.]{grill2020bootstrap}
Jean-Bastien Grill, Florian Strub, Florent Altch{\'e}, Corentin Tallec, Pierre
  Richemond, Elena Buchatskaya, Carl Doersch, Bernardo Avila~Pires, Zhaohan
  Guo, Mohammad Gheshlaghi~Azar, et~al.
\newblock Bootstrap your own latent-a new approach to self-supervised learning.
\newblock \emph{Advances in neural information processing systems},
  33:\penalty0 21271--21284, 2020.

\end{thebibliography}
\end{document}